\newtheorem{theorem}{Theorem}
\definecolor{backgreen}{RGB}{213, 232, 212}
\newcommand{\backg}{\cellcolor{backgreen!50}}
\definecolor{backred}{RGB}{255, 190, 190}
\newcommand{\boldres}[1]{{\textbf{\textcolor{red}{#1}}}}
\newcommand{\secondres}[1]{{\underline{\textcolor{blue}{#1}}}}
\definecolor{brown}{RGB}{139,64,0}
\newcommand{\ourname}{{DeMa}}
\newcommand{\SSD}{{Mamba-SSD}}
\newcommand{\DALA}{{Mamba-DALA}}
\definecolor{darkgreen}{rgb}{0.0,0.5,0.0}
\newcommand{\cmark}{\textcolor{darkgreen}{\ding{51}}}
\newcommand{\xmark}{\textcolor{red}{\ding{55}}}
\title{DeMa: Dual-Path Delay-Aware Mamba for Efficient Multivariate Time Series Analysis}
\author{
		Rui An\textsuperscript{1,2,}\thanks{Equal Contribution} ,
		Haohao Qu\textsuperscript{2,}\footnotemark[1], 
        Wenqi Fan\textsuperscript{2}\textsuperscript{\Letter}, 
		Xuequn Shang\textsuperscript{1}\textsuperscript{\Letter}, 
        Qing Li\textsuperscript{2}\\
    \textsuperscript{1}Northwestern Polytechnical University,
    \textsuperscript{2}The Hong Kong Polytechnic Univeristy\\
    \{rui77.an, haohao.qu\}@connect.polyu.hk, \\wenqifan03@gmail.com, shang@nwpu.edu.cn, csqli@comp.polyu.edu.hk\\
}
\begin{document}

\maketitle

\begin{abstract}
Accurate and efficient multivariate time series (MTS) analysis is increasingly critical for a wide range of intelligent applications, including traffic forecasting, anomaly detection for industrial maintenance, and trajectory classification for health monitoring. Within this realm, Transformers have emerged as the predominant architecture due to their strong ability to capture pairwise dependencies. However, Transformer-based models suffer from quadratic computational complexity and high memory overhead, limiting their scalability and practical deployment for long-term, large-scale MTS modeling.
Recently, Mamba has emerged as a promising linear-time alternative with high expressiveness. Nevertheless, directly applying vanilla Mamba to MTS remains suboptimal due to three key limitations: (i) the lack of explicit cross-variate modeling, (ii) difficulty in disentangling the entangled intra-series temporal dynamics and inter-series interactions, and (iii) insufficient modeling of latent time-lag interaction effects. These issues constrain its effectiveness across diverse MTS tasks.
To address these challenges, we propose \textbf{\ourname{}}, a dual-path \textbf{De}lay-Aware \textbf{Ma}mba backbone for efficient and effective MTS analysis. \ourname{} preserves Mamba’s linear-complexity advantage while substantially improving its suitability for multivariate settings. Specifically, \ourname{} introduces three key innovations: (i) it decomposes the MTS context into intra-series temporal dynamics and inter-series interactions and learns them via two dedicated paths; (ii) it develops a temporal path with a \textit{Mamba-SSD} module to capture long-range dynamics within each series, accommodating variable-length inputs and enabling series-independent, parallel computation while maintaining linear complexity; and (iii) it designs a variate path with a \textit{Mamba-DALA} module that integrates delay-aware linear attention to model cross-variate dependencies, enhancing fine-grained, delay-sensitive dependency learning. Extensive experiments on five representative tasks, long- and short-term forecasting, data imputation, anomaly detection, and series classification, demonstrate that \ourname{} achieves state-of-the-art performance while delivering remarkable computational efficiency.
\end{abstract}

\section{Introduction}

Multivariate time series (\textbf{MTS}) are a crucial and fundamental data modality in both industrial and daily scenarios. They are collected at scale by physical and virtual sensors, which continuously record system measurements and encapsulate valuable information about the evolving dynamics of real-world systems~\cite{kong2025deep,liang2025itfkan}. 
Consequently, time series analysis serves as a cornerstone for understanding and predicting the behaviors of complex systems, enabling a wide range of intelligent applications, such as traffic flow forecasting for transportation scheduling~\cite{an2025damba,huo2023hierarchical}, missing data imputation for web stream processing~\cite{fang2020time}, anomaly detection for maintenance~\cite{chen2024lara,blazquez2021review}, and trajectory classification for health monitoring~\cite{qin2020imaging}.

\begin{figure*}[t]
    \centering
    \begin{subfigure}[b]{0.43\textwidth}
        \centering
        \includegraphics[width=\linewidth]{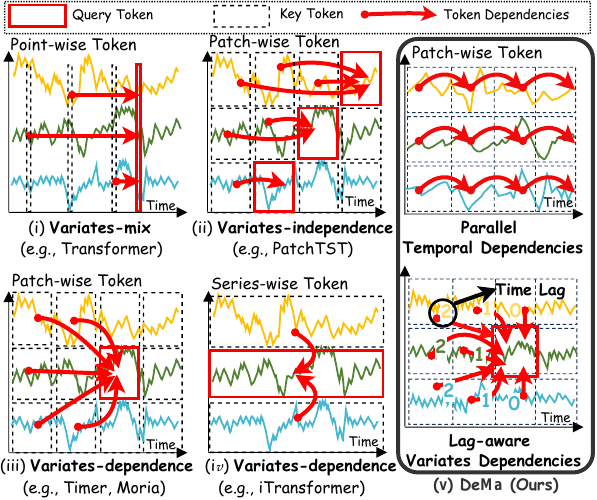}
        \caption{}
        \label{fig:motivation_a}
    \end{subfigure}
    \hfill
    \begin{subfigure}[b]{0.53\textwidth}
        \centering
        \resizebox{\linewidth}{!}{%
        \renewcommand{\arraystretch}{1.4}
            \begin{tabular}{ccccc}
            \toprule
            \multirow{2}{*}{Category} & Representative & Temporal & Variate & Computational \\
            & Model & Dependencies & Dependencies & Complexity \\
            \midrule
            \multirow{2}{*}{(i)} & \multirow{2}{*}{Transformer~\cite{vaswani2017attention}} 
            & \multirow{2}{*}{\cmark} & \multirow{2}{*}{\xmark} 
            & \multirow{2}{*}{$\mathcal{O}(\textcolor{red}{T^2}D)$} \\
            & & & & \\
            \midrule
            \multirow{2}{*}{(ii)} & PatchTST~\cite{PatchTST} 
            & \cmark & \xmark & $\mathcal{O}(N\textcolor{red}{L^2}D)$ \\
            & Timer~\cite{liutimer} 
            & \cmark & \xmark & $\mathcal{O}(N\textcolor{red}{L^2}D)$ \\
            \midrule
            \multirow{2}{*}{(iii)} & \multirow{2}{*}{Moirai~\cite{woounified}} 
            & \multirow{2}{*}{\cmark} & \multirow{2}{*}{\cmark} 
            & \multirow{2}{*}{$\mathcal{O}(\textcolor{red}{N^2L^2}D)$} \\
            & & & & \\
            \midrule
            \multirow{2}{*}{(iv)} & \multirow{2}{*}{iTransformer~\cite{liuitransformer}} 
            & \multirow{2}{*}{\cmark} & \multirow{2}{*}{\cmark} 
            & \multirow{2}{*}{$\mathcal{O}(\textcolor{red}{N^2}D)$} \\
            & & & & \\
            \midrule
            \multirow{2}{*}{(v)} & \multirow{2}{*}{\textbf{\ourname{}} (\textbf{Ours})} 
            & \multirow{2}{*}{\cmark} & \multirow{2}{*}{\cmark} & \backg \\
            & & & & \backg \multirow{-2}{*}{$\mathcal{O}(2NL\textcolor{red}{D^2})$} \\
            \bottomrule
            \end{tabular}%
        }
        \caption{}
        \label{fig:motivation_b}
    \end{subfigure}
    \caption{Dependency modeling paradigms and complexity comparison of representative multivariate time-series architectures. 
    (a) Illustration of tokenization schemes and dependency modeling strategies, including variate-mixing, variate-independence, and variate-dependence designs. 
    (b) Computational complexity comparison of representative models. 
    Here, $T$ denotes the lookback length, $N$ the number of variates, $L$ the token length, and $D$ the embedding dimension. In typical long-horizon settings, $T > L \gg N, D$. \textbf{\ourname{}} decouples temporal modeling from delay-aware cross-variate interactions, achieving linear-time complexity.}
    \label{fig:motivation}
\end{figure*}

Due to the complex and non-stationary nature of real-world systems, observed MTS often exhibit intricate and entangled patterns. Along the temporal axis, multiple variations, including long-term trends, seasonal cycles, and irregular fluctuations, are typically mixed and overlapped. Along the variate axis, correlated series interact through latent dependencies, where the evolution of one variable can influence, or be influenced by, others with time delays. Accordingly, a series of deep models have been proposed to capture \textit{temporal dependencies} and \textit{variate dependencies}~\cite{han2024capacity}. From the perspective of backbone architectures, existing methods can be broadly categorized into RNN-, CNN-, MLP-, and Transformer-based methods. 
Typically, RNN-based models~\cite{SegRNN, hewamalage2021recurrent, RNN1} leverage recurrent structures to model temporal state transitions, while CNN-based models~\cite{Timesnet, SCINet, luo2024moderntcn} employ temporal convolutional networks (TCNs) to extract local variation patterns. However, both RNN- and CNN-based methods are constrained by limited effective receptive fields, which hinders their ability to capture long-term dependencies. 
Lightweight MLP-based models~\cite{DLinear, timemixer} utilize stacked fully connected layers to model temporal patterns, where dense connections implicitly capture measurement-free relationships among time points within each variable~\cite{liuitransformer}. These linear forecasters are primarily designed for forecasting tasks and are computationally efficient. Nevertheless, due to their architectural simplicity and limited representational capacity, MLP-based models suffer from an information bottleneck, making it difficult to represent long-range and complex temporal dependencies. 
Transformer-based models, which adopt attention mechanisms to capture pairwise dependencies with a global receptive field, have become mainstream for MTS modeling~\cite{Informer, fedformer, wu2021autoformer, Pyraformer, PatchTST, Crossformer, liuitransformer}. 
These methods typically employ different tokenization strategies~\cite{jia2025principles, qu2025tokenrec, qu2025diffusion, zhou2025hd}, such as point-wise tokens~\cite{Informer, fedformer, wu2021autoformer, Pyraformer}, patch-wise tokens~\cite{PatchTST, Crossformer}, and series-wise tokens~\cite{liuitransformer}, and then apply attention to model temporal dependencies, variate dependencies, or both, as illustrated in Figure~\ref{fig:motivation}(a)(i-iv).
Despite their remarkable performance, Transformer-based models face growing concerns about their \textbf{\textit{quadratic computational complexity}}~\cite{TSMixer, DLinear}, leading to substantial computational demands and memory overhead. Specifically, the cost of self-attention scales quadratically with the token length, as listed in Figure~\ref{fig:motivation}(b)(i–iv). For an MTS instance with $N$ variates, each represented by $L$ tokens embedded in $D$ dimensions, the complexity of fully self-attention models can reach unacceptably $\mathcal{O}(N^2L^2D)$~\cite{liutimer, woounified}. As both $N$ and $L$ grow, quadratic scaling becomes a major bottleneck for long-term, large-scale MTS modeling, motivating the development of more computationally efficient architectures for MTS analysis.

% Mamba
Recently, \textbf{Mamba} has emerged as a promising backbone for capturing complex dependencies in sequential data~\cite{gu2023mamba,dao2024transformers,qu2024survey}. By leveraging selective mechanisms and hardware-aware computational designs~\cite{qu2024survey}, Mamba attains modeling performance comparable to Transformers while maintaining linear complexity with respect to sequence length. Furthermore, Mamba-2~\cite{dao2024transformers} further improves efficiency by exploiting the structured State Space Duality (SSD) property to reformulate computations into highly parallelizable matrix operations. With \textit{high expressiveness} enabled by the selective mechanism, \textit{efficient training and inference} supported by parallel computation, and \textit{linear scalability} with respect to context length, Mamba offers a compelling alternative to attention-based architectures for sequential modeling. Consequently, an increasing number of Mamba-based models have been proposed across various domains, such as Jamba~\cite{lieber2024jamba} in natural language processing, Vision Mamba~\cite{zhu2024vision} in computer vision, Caduceus~\cite{schiff2024caduceus} in genomics, and SSD4Rec~\cite{qu2024ssd4rec} for recommender systems.

Despite the recent success of Mamba-style techniques in enabling efficient MTS analysis, directly applying standard Mamba to multivariate time series often yields suboptimal performance compared with state-of-the-art methods~\cite{wang2024deep}. This gap can be mainly attributed to the following limitations.
(1) \textbf{Difficulty in disentangling entangled temporal and variate contexts:}
Unlike language tokens, which typically lie in relatively discrete contexts, MTS observations are jointly governed by intra-series temporal dynamics and inter-series interactions. These factors often overlap and intertwine, making it difficult for vanilla Mamba to disentangle meaningful and structured representation.
(2) \textbf{Lack of explicit cross-variate modeling:}
Mamba was originally developed for language-like, essentially univariate sequences, and therefore lacks explicit mechanisms for capturing dependencies among multiple correlated variates. Prior studies have shown that modeling inter-variable interactions is crucial for effective MTS representation learning~\cite{liuitransformer, Crossformer, LIFT}.
(3) \textbf{Insufficient explicit modeling of lag dependencies:}
Mamba relies on recurrent hidden-state updates, in which each state depends primarily on the immediately preceding timestep. This design may limit its ability to explicitly capture lagged effects that are pervasive in real-world MTS, where changes in one variate may influence another only after a non-negligible delay~\cite{long2024unveiling, LIFT, jiang2023pdformer,an2025damba}. For example, in traffic forecasting, an accident in one region may not immediately affect adjacent areas; instead, its impact often propagates through the network over several minutes. Precisely capturing such propagation delays is therefore crucial for
accurate dependency modeling and reliable prediction.

Motivated by the above limitations and the urgent practical demands of long-term, large-scale MTS analysis, we aim to develop a Mamba-based time-series representation backbone that simultaneously captures intricate temporal dependencies and delay-aware cross-variate interactions while preserving linear-time efficiency. Toward this goal, we explicitly decompose the time-series context into (i) \textit{intra-series temporal dynamics}, which can be learned independently and in parallel for each variate, and (ii) \textit{inter-series interactions}, which are modeled in a delay-aware manner to capture cross-variate dependencies, as illustrated in Figure~\ref{fig:motivation}(a)(v).

To this end, we propose a dual-path \textbf{De}lay-aware \textbf{Ma}mba for efficient multivariate time series analysis, namely \textbf{\ourname{}}.
Specifically, \ourname{} first adaptively selects task-relevant spectra via an Adaptive Fourier Filter and decomposes the input into a \textit{Cross-Time Component} and a \textit{Cross-Variate Component}.
These two components are then fed into stacked \textbf{DuoMNet} blocks.
Each DuoMNet block contains two parallel paths, each coupled with a scan operator and a lightweight Mamba-based module.
Concretely, the \textit{Cross-Time Scan} serializes each variate along the temporal axis into a 1D sequence, which is processed by \textit{Mamba-SSD}.
This design supports MTS-independent parallel computation across variates and efficiently captures long-range temporal dependencies within each series, with computation scaling linearly with the series length.
In parallel, the \textit{Cross-Variate Scan} reorganizes tokens to emphasize inter-variable interactions at each token step.
The scanned sequence is processed by \textit{Mamba-DALA}, which integrates Delay-Aware Linear Attention (DALA) to explicitly model cross-variate dependencies with both a global correlation delay and a token-level relative delay.
As a result, \ourname{} achieves delay-aware variate interaction modeling while preserving linear-time computation.
Finally, we fuse the temporal-path and variate-path representations through a weighted fusion layer and project to task-specific outputs via lightweight heads.
To sum up, our major contributions include:
\begin{itemize}[leftmargin=*]
    \item We propose \textbf{\ourname{}}, a dual-path Mamba-based backbone for general MTS analysis that jointly captures intra-series temporal dynamics via a temporal path and delay-aware cross-variate dependencies via a variate path. Benefiting from a linear-time state-space backbone, \ourname{} achieves a favorable trade-off between accuracy and efficiency for long-term and large-scale MTS modeling.
    
    \item We introduce Mamba-SSD to model intra-series temporal dependencies within each variate. It accommodates variable-length inputs and enables series-independent, parallel modeling via block-based matrix multiplication, thereby improving scalability and throughput.
    
    \item We design \DALA{}, which integrates Delay-Aware Linear Attention to explicitly model cross-variate interactions with both global correlation delays and token-level relative delays, thereby enhancing fine-grained, delay-sensitive dependency learning.
    
    \item We conduct extensive experiments on five mainstream tasks, including long- and short-term forecasting, imputation, classification, and anomaly detection. The results demonstrate that \ourname{} achieves consistently strong performance across tasks while significantly reducing training time and GPU memory usage, suggesting its practicality and scalability for real-world large-scale MTS analysis.
\end{itemize}

The remainder of this paper is structured as follows. Section~\ref{sec:pre} introduces the preliminaries.
Section~\ref{sec:method} introduces the proposed model, which is evaluated and discussed in Section~\ref{sec:evaluation}.
Then, Section~\ref{sec:literature} summarizes the recent development of time series analysis. Finally, conclusions are drawn in Section~\ref{sec:conclusion}.

\section{Preliminaries}
\label{sec:pre}
\subsection{\textbf{Notations and Definitions}}

We denote a multivariate time series (MTS) within a lookback window as
$\mathcal{X}\in\mathbb{R}^{N\times T}$, where $N$ denotes the number of variates and
$T$ denotes the number of time steps. The $i$-th variate is represented as
$\mathcal{X}_{i,:}=\{x_{i1},x_{i2},\ldots,x_{iT}\}\in\mathbb{R}^{T}$, while the multivariate observation at time step $t$ is denoted by
$\mathcal{X}_{:,t}=\{x_{1t},x_{2t},\ldots,x_{Nt}\}\in\mathbb{R}^{N}$.

The primary objective of this work is to learn a task-agnostic representation
$\mathbf{Z}=\mathcal{F}_{\Theta}(\mathcal{X})$, which can be adapted to diverse downstream tasks through task-specific heads.
These tasks include point-level tasks, such as forecasting the future $H$ time steps with output
$\mathcal{Y}\in\mathbb{R}^{N\times H}$, as well as anomaly detection and missing-value imputation on the input window with output
$\mathcal{Y}\in\mathbb{R}^{N\times T}$.
In addition, we consider the sequence-level classification task, which assigns the input MTS to one of $C$ categories, with output
$\mathcal{Y}\in\mathbb{R}^{1\times C}$.

\subsection{\textbf{Mamba}}
\subsubsection{\textbf{State Space Model (SSM)}}The classical State Space Models (SSMs) describes the state evolution of a linear time-invariant system, which map input signal $\mathbf{x}=\{x(1),\cdots,x(t),\cdots\} \in \mathbb{R}^{L\times D} \mapsto \mathbf{y}=\{y(1),\cdots,y(t),\cdots\} \in \mathbb{R}^{L \times D}$ through implicit latent state $h(t) \in \mathbb{R}^{N\times D}$, where $t$, $L$, $D$ and $N$ indicate the time step, sequence length, channel number of the signal and state size, respectively.
These models can be formulated as the following linear ordinary differential equations:
\begin{equation}
\label{eq:ode}
\begin{split}
h^{\prime}(t)=\mathbf{A}h(t)+\mathbf{B}x(t),\quad
y(t)=\mathbf{C}h(t)+\mathbf{D}x(t),
\end{split}
\end{equation}
where $\mathbf{A} \in \mathbb{R}^{N\times N} $ is the state transition matrix that describes how states change over time, $\mathbf{B} \in \mathbb{R}^{N\times D} $ is the input matrix that controls how inputs affect state changes, $\mathbf{C} \in \mathbb{R}^{N\times D} $ denotes the output matrix that indicates how outputs are generated based on current states and $\mathbf{D} \in \mathbb{R}^{D} $ represents the command coefficient that determines how inputs affect outputs directly. Most SSMs exclude the second term in the observation equation, i.e., set $\mathbf{D}x(t)=0$, which corresponds to a skip connection in deep learning models.
The time-continuous nature poses challenges for integration into deep learning architectures.
To alleviate this issue, most methods utilize the Zero-Order Hold rule~\cite{gu2023mamba} to discretize continuous time into $K$ intervals, which assumes that the function value remains constant over the interval $\boldsymbol{\Delta} \in \mathbb{R}^D$. The Eq.~\eqref{eq:ode} can be reformulated as:
\begin{equation}
\label{eq:Discrete}
\begin{split}
    h_t = \overline{\mathbf{A}}  h_{t-1} + \overline{\mathbf{B}}  x_t,
    \quad y_t = \mathbf{C}  h_t, 
\end{split}
\end{equation}
where $\overline{\mathbf{A}} = \exp\left( \boldsymbol{\Delta} \mathbf{A} \right)$ and  $\overline{\mathbf{B}}=(\boldsymbol{\Delta} \mathbf{A})^{-1}(\exp (\boldsymbol{\Delta} \mathbf{A})-\mathbf{I}) \cdot \boldsymbol{\Delta} \mathbf{B}$.
Discrete SSMs can be interpreted as a combination of CNNs and RNNs. Typically, the model employs a convolutional mode for efficient, parallelizable training and switches to a recurrent mode for efficient autoregressive inference. The formulations in Eq.~\eqref{eq:Discrete} are equivalent to the following convolution~\cite{gu2020hippo}:
\begin{equation}
\begin{split}
&\overline{\mathbf{K}} = (\mathbf{C} \overline{\mathbf{B}},\mathbf{C} \overline{\mathbf{A}}\overline{\mathbf{B}},\cdots,\mathbf{C} \overline{\mathbf{A}}^k \overline{\mathbf{B}}, \cdots),\\
&\mathbf{y} = \mathbf{x} * \overline{\mathbf{K}},    
\end{split}
\end{equation}
Thus, the overall process can be represented as:
\begin{equation}
\mathbf{y} = \textbf{SSM}(\overline{\mathbf{A}}, \overline{\mathbf{B}}, \mathbf{C})(\mathbf{x}).
\end{equation}

\subsubsection{\textbf{Selective State Space Model (Mamba)}}
The discrete SSMs are based on data-independent parameters, meaning that parameters $\bar{\mathbf{A}}$, $\bar{\mathbf{B}}$, and ${\mathbf{C}}$ are time-invariant and the same for any input, limiting their effectiveness in compressing context into a smaller state~\cite{gu2023mamba}. 
Mamba introduces a selective mechanism to selectively retain, propagate, or suppress information according to the current token~\cite{gu2023mamba, qu2024survey}. This directly addresses a major weakness of earlier subquadratic sequence models, namely their limited ability for content-based reasoning. As a result, Mamba preserves a form of context-aware sequence modeling that is much closer in spirit to attention, while avoiding the dense token-to-token interactions of standard self-attention.
Specifically, it utilizes Linear Projection to parameterize the weight matrices $\{\mathbf{B}_{t}\}_{t=1}^{L} $, $\{\mathbf{C}_{t}\}_{t=1}^{L} $ and $\{\mathbf{\Delta}_{t}\}_{t=1}^{L}$ according to model input $\{x_t\}_{t=1}^{L}$, improving the context-aware ability, i.e.,:
\begin{equation}
\begin{split}
    &\overline{\mathbf{B}}_t = \texttt{Linear}_{\mathbf{B}}(x_t),\\
    &\overline{\mathbf{C}}_t = \texttt{Linear}_{\mathbf{C}}(x_t), \\
    &\boldsymbol{\Delta}_t = \texttt{Softplus}\left(\texttt{Linear}_{\boldsymbol{\Delta}}(x_t)\right).
\end{split}
\end{equation}
Then the output sequence $\{y_{t}\}_{i=t}^{L} $ can be computed with those input-adaptive discretized parameters as follows:
\begin{equation}
\label{eq:ssm}
h_{t}=\overline{\mathbf{A}}_{t} h_{t-1}+\overline{\mathbf{B}}_{t} x_{t}, \quad y_{t}=\overline{\mathbf{C}_{t}} h_{t}.
\end{equation}

During training, both computation and memory scale linearly with sequence length, rather than quadratically as in standard Transformers. During autoregressive inference, Mamba only needs to update a recurrent hidden state rather than maintaining an ever-growing KV cache over the full context. Moreover, Mamba introduces a hardware-aware parallel scan algorithm~\cite{harris2007parallel}, enabling selective and time-varying SSMs to be practically efficient on modern accelerators.

\subsubsection{\textbf{Mamba-2}}
Mamba-2~\cite{dao2024transformers} introduces a comprehensive framework, Structured State-Space Duality (SSD). Leveraging SSD, it reformulates SSMs as semi-separable matrices via matrix transformation and develops a more hardware-efficient computation method based on block-decomposed matrix multiplication, i.e.,
\begin{equation}
\begin{split}
\mathbf{y} &= \textbf{SSD}(\mathbf{A}, \mathbf{B}, \mathbf{C})(\mathbf{x}) \\
&= \mathbf{M}\mathbf{x}\\
\text{s.t.}~ \mathbf{M} &=
\begin{pmatrix}
\mathbf{C}_0^\top \mathbf{A}_{0:0} \mathbf{B}_0 & & & &\\
\mathbf{C}_1^\top \mathbf{A}_{1:0} \mathbf{B}_0 & \mathbf{C}_1^\top \mathbf{A}_{1:1} \mathbf{B}_1 & & & \\
\ldots & \ldots & \ldots & \ldots &  \\
\mathbf{C}_j^\top \mathbf{A}_{j:0} \mathbf{B}_0 & \mathbf{C}_j^\top \mathbf{A}_{j:1} \mathbf{B}_1 & \ldots & \mathbf{C}_j^\top \mathbf{A}_{j:i} \mathbf{B}_i  \\
\end{pmatrix}
\end{split}
\end{equation}
where $\mathbf{M}$ denotes the matrix form of SSMs that uses the sequentially semiseparable representation, and $\mathbf{M}_{ji} = \mathbf{C}_j^\top  \mathbf{A}_{j:i} \mathbf{B}_i$, $\mathbf{C}_j$ and $\mathbf{B}_i$ represent the selective space state matrices associated with input tokens $x_j$ and $x_i$, respectively. $\mathbf{A}_{j:i}$ denotes the selective matrix of hidden states corresponding to the input tokens ranging from $j$ to $i$. 
Mamba-2 achieves a 2-8$\times$ faster training process than Mamba-1's parallel associative scan while remaining competitive with Transformers. 
\section{The Proposed Method: \ourname{}}
\label{sec:method}

\subsection{\textbf{Structure Overview}} 
As shown in Figure~\ref{workflow} (left), \ourname{} is designed in a challenge-driven manner to address three key limitations in multivariate time-series (MTS) modeling: the entanglement of temporal and variate contexts, the lack of explicit cross-variate modeling, and the inadequate modeling of lag-aware interactions. Accordingly, the framework consists of three stages: the \emph{Adaptive Fourier Filter}, stacked \emph{DuoMNet Blocks}, and \emph{Task-specific Projection}.
The \emph{Adaptive Fourier Filter} first decomposes the raw MTS into an intra-series temporal component and an inter-series variate component, thereby reducing interference between temporal dynamics and cross-variate interactions before representation learning.
Built upon this decomposition, the stacked \emph{DuoMNet Blocks} address the second challenge through a dual-path design. As shown in Figure~\ref{workflow} (right), each block contains a temporal path for modeling intra-series dependencies and a variate path for capturing inter-series dependencies, allowing the model to explicitly and separately learn these two types of structure.
Furthermore, the variate path incorporates delay-aware interaction modeling, enabling \ourname{} to capture not only whether variables are correlated, but also how their interactions evolve.
Finally, the learned shared representation is passed to lightweight \emph{Task-specific Projection} layers for downstream tasks such as forecasting, imputation, classification, and anomaly detection.

\begin{figure*}[h]
\includegraphics[width=0.98\textwidth]{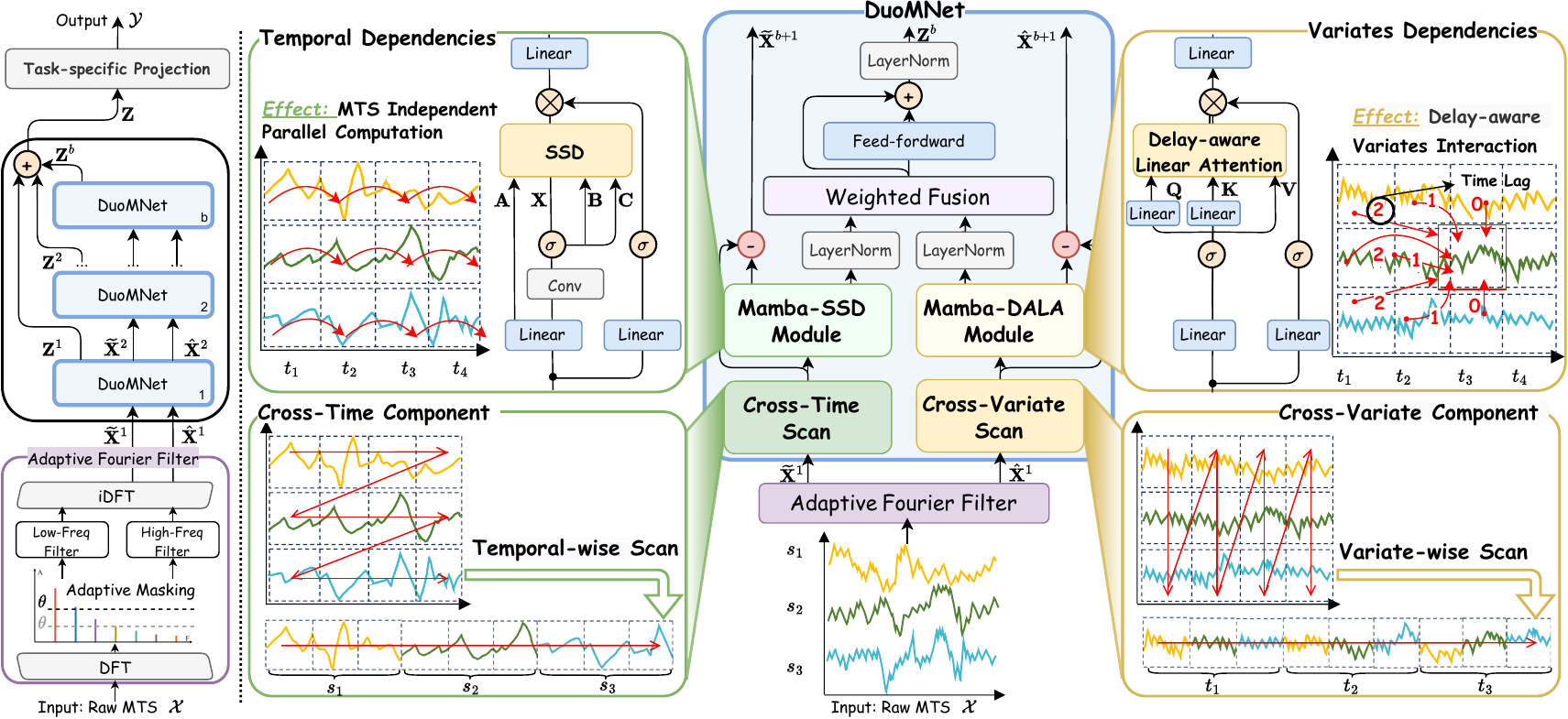}
\caption{The overall framework of \ourname{}. The proposed \ourname{} (Left) comprises three key components: an Adaptive Fourier Filter, stacked DuoMNet Blocks, and Task-specific Projection. Importantly, the DuoMNet Block (Right) integrates insights from the \SSD{} and \DALA{} paths to comprehensively model cross-time and cross-variate dependencies.}
\label{workflow}
\end{figure*} 

\subsubsection{\textbf{Adaptive Fourier Filter (AFF)}}

Real-world multivariate time series often exhibit overlapping and entangled patterns arising from (i) \textit{intra-series temporal dynamics} within each variate and (ii) \textit{inter-series interactions} among correlated variates. In the frequency domain, slowly varying structures such as trends and seasonal cycles are typically concentrated in low-frequency bands. In contrast, short-term variations, including abrupt changes and time-varying cross-variate effects, tend to appear at higher frequencies~\cite{yi2023frequency,liu2023koopa}. Meanwhile, different tasks emphasize different spectral contents: numerical prediction tasks (such as forecasting, imputation, and anomaly detection) are usually sensitive to local dynamics, whereas semantic tasks (e.g., classification) rely more on global and long-range patterns~\cite{yue2022ts2vec}. These characteristics motivate a task-adaptive spectral decomposition that separates and highlights task-relevant components.

To facilitate modeling of intra-series dynamics and inter-series interactions, we split the input into two complementary frequency subsets. The \emph{globally dominant} frequencies capture broadly shared, long-term structures and mainly reflect per-variate temporal dynamics, while the \emph{residual} frequencies encode more window-specific variations that are often indicative of time-varying cross-variate interactions. Concretely, given an input window $\mathcal{X}\in\mathbb{R}^{N\times T}$ with length $T$, we first apply the Fast Fourier Transform (FFT) along the temporal axis for each variate and obtain the averaged amplitude for each frequency index in $\mathcal{S}=\{0,1,\dots,\lfloor T/2\rfloor\}$. We rank $\mathcal{S}$ by amplitude, select the top $\theta$ fraction as $\mathcal{S}_\theta$, and denote the complement by $\overline{\mathcal{S}_\theta}=\mathcal{S}\setminus\mathcal{S}_\theta$. The Adaptive Fourier Filter $\operatorname{AFF}(\cdot)$ then decomposes $\mathcal{X}$ as
\begin{equation}
\begin{split}
\text{Cross-Time Component}:\widetilde{\mathcal{X}} &= \operatorname{FFT}^{-1}\!\Big(\operatorname{Filter}\big(\mathcal{S}_\theta,\operatorname{FFT}(\mathcal{X})\big)\Big),\\
\text{Cross-Variate Component}:\hat{\mathcal{X}} &= \operatorname{FFT}^{-1}\!\Big(\operatorname{Filter}\big(\overline{\mathcal{S}_\theta},\operatorname{FFT}(\mathcal{X})\big)\Big),
\end{split}
\label{eq:aff}
\end{equation}
where $\operatorname{FFT}^{-1}$ denotes the inverse FFT and $\operatorname{Filter}(\cdot)$ retains only the coefficients indexed by the specified set. The \textit{Cross-Time Component} $\widetilde{\mathcal{X}}$ is dominated by global, low-frequency spectral components that primarily reflect intra-series temporal dynamics, and \textit{Cross-Variate Component} $\hat{\mathcal{X}}$ preserves the remaining variations, typically higher-frequency and window-specific, which are related to inter-series interactions.

This decomposition is particularly valuable when a state-space model serves as the backbone for MTS modeling. In Mamba, the HiPPO-based state-space formulation has been shown to induce approximately orthogonal transformations~\cite{gu2022train}. However, the orthogonality of the learned transform does not imply that the mixed components in the raw MTS can be cleanly separated. If two underlying components are \emph{not} orthogonal in the input space, their projections onto any orthogonal basis must share some basis directions. We formalize this limitation below.

\begin{theorem}[Non-orthogonality implies basis overlap]
\label{thm:nonorth_disentangle}
Given the input series $\mathcal{X}=\widetilde{\mathcal{X}}+\hat{\mathcal{X}}$ and let $E=\{e_1,\dots,e_N\}$ be an orthogonal basis. Suppose
\[
\widetilde{\mathcal{X}}=\sum_{i=1}^N a_i e_i,\qquad
\hat{\mathcal{X}}=\sum_{i=1}^N b_i e_i,
\]
and define the supports $\widetilde{E}=\{e_i\in E:\ a_i\neq 0\}$ and $\hat{E}=\{e_i\in E:\ b_i\neq 0\}$.
If $\widetilde{\mathcal{X}}$ and $\hat{\mathcal{X}}$ are not orthogonal, i.e., $\langle \widetilde{\mathcal{X}}, \hat{\mathcal{X}}\rangle \neq 0$, then $\widetilde{E}\cap \hat{E}\neq \emptyset$.
Equivalently, two non-orthogonal components cannot be represented on disjoint subsets of an orthogonal basis.
\end{theorem}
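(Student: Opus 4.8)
The plan is to reduce everything to a single expansion of the inner product $\langle \widetilde{\mathcal{X}}, \hat{\mathcal{X}}\rangle$ in the orthogonal basis $E$, and then read off the support condition directly; equivalently, I would argue the contrapositive, showing that disjoint supports force orthogonality. Since $E$ is a basis, each $e_i$ is nonzero, so $\langle e_i,e_i\rangle=\|e_i\|^2>0$, which will be the only structural fact I need beyond bilinearity.

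First I would substitute the given expansions $\widetilde{\mathcal{X}}=\sum_{i}a_i e_i$ and $\hat{\mathcal{X}}=\sum_{j}b_j e_j$ into the inner product and use bilinearity to write
\[
\langle \widetilde{\mathcal{X}}, \hat{\mathcal{X}}\rangle=\sum_{i=1}^N\sum_{j=1}^N a_i b_j\,\langle e_i,e_j\rangle .
\]
Next I would invoke orthogonality of $E$, namely $\langle e_i,e_j\rangle=0$ for $i\neq j$, which annihilates all off-diagonal terms and collapses the double sum to its diagonal,
\[
\langle \widetilde{\mathcal{X}}, \hat{\mathcal{X}}\rangle=\sum_{i=1}^N a_i b_i\,\|e_i\|^2 .
\]
This identity is the crux of the whole argument: it converts a geometric statement about orthogonality into a purely coefficientwise condition on the products $a_i b_i$.

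From here both directions are immediate. If $\langle \widetilde{\mathcal{X}}, \hat{\mathcal{X}}\rangle\neq 0$, then the diagonal sum cannot vanish identically, so there exists an index $i^\star$ with $a_{i^\star}b_{i^\star}\|e_{i^\star}\|^2\neq 0$; since $\|e_{i^\star}\|^2>0$, this gives $a_{i^\star}\neq 0$ and $b_{i^\star}\neq 0$, placing $e_{i^\star}$ in both $\widetilde{E}$ and $\hat{E}$, hence $\widetilde{E}\cap\hat{E}\neq\emptyset$. For the equivalent restatement, if the supports are disjoint then for every $i$ at least one of $a_i,b_i$ is zero, so each term $a_i b_i$ vanishes and the inner product is zero. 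I do not anticipate a genuine obstacle here; the lone subtlety is ensuring $\|e_i\|^2>0$ so that a nonzero diagonal coefficient truly certifies both $a_{i^\star}\neq 0$ and $b_{i^\star}\neq 0$ (in the orthonormal case $\|e_i\|^2=1$ and the sum is simply $\sum_i a_i b_i$). The result follows entirely from bilinearity and the vanishing of cross terms under orthogonality, with no deeper machinery required.
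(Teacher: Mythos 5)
Your proposal is correct and follows essentially the same route as the paper's proof: expand $\langle \widetilde{\mathcal{X}}, \hat{\mathcal{X}}\rangle$ by bilinearity, use orthogonality to collapse the double sum to the diagonal $\sum_i a_i b_i \langle e_i,e_i\rangle$, and conclude that a nonzero inner product forces some index with $a_i\neq 0$ and $b_i\neq 0$. The only cosmetic difference is that the paper phrases the final step as a contradiction from disjoint supports, while you argue it directly (and also note the contrapositive), which is logically the same argument.
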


\begin{proof}
Since $E=\{e_1,\dots,e_N\}$ is an orthogonal basis, we have $\langle e_i,e_j\rangle = 0$ for $i\neq j$ and
$\langle e_i,e_i\rangle>0$ for all $i$. Thus,
\begin{equation}
\begin{split}
\langle \widetilde{\mathcal{X}}, \hat{\mathcal{X}}\rangle
&= \left\langle \sum_{i=1}^N a_i e_i,\ \sum_{j=1}^N b_j e_j \right\rangle \\
&= \sum_{i=1}^N \sum_{j=1}^N a_i b_j \langle e_i, e_j\rangle \\
&= \sum_{i=1}^N a_i b_i \langle e_i, e_i\rangle,
\end{split}
\end{equation}
If $\widetilde{E}\cap \hat{E}=\emptyset$, then for every $i$ at least one of $a_i$ or $b_i$ is zero, implying $a_i b_i=0$ for all $i$, which implies
\begin{equation}
\langle \widetilde{\mathcal{X}}, \hat{\mathcal{X}}\rangle
= \sum_{i=1}^N a_i b_i \langle e_i,e_i\rangle = 0.
\end{equation}
This contradicts the assumption $\langle \widetilde{\mathcal{X}}, \hat{\mathcal{X}}\rangle \neq 0$.
Therefore, there exists at least one index $i$ such that $a_i\neq 0$ and $b_i\neq 0$, i.e., $e_i\in \widetilde{E}\cap \hat{E}$.
\end{proof}

Theorem~\ref{thm:nonorth_disentangle} implies that if the Cross-Time component $\widetilde{\mathcal{X}}$ and the Cross-Variate component $\hat{\mathcal{X}}$ are non-orthogonal, their representations on any orthogonal basis must overlap, making a complete separation by an orthogonal transform impossible. In practice, this non-orthogonality often arises because different generative factors are \emph{not} confined to disjoint frequency bands: for example, trend, seasonality, and interaction-driven fluctuations can partially overlap in the same spectral range and thus superimpose in both the time and frequency domains. Therefore, although HiPPO-based SSMs tend to learn orthogonal transformations~\cite{gu2022train}, applying a single Mamba model directly to raw MTS may still entangle these factors. In contrast, our proposed $\operatorname{AFF}(\cdot)$ explicitly decomposes $\mathcal{X}$ into complementary spectral parts, allowing subsequent modules to model them separately and thereby reducing representational entanglement and avoiding redundant modeling capacity.

\subsection{\textbf{DuoMNet Block}}
In this section, we detail the design of the proposed DuoMNet block, which comprises a temporal path to capture cross-time dependencies and a variate path to learn cross-variate dependencies.

\subsubsection{\textbf{Cross-Time Scan and Cross-Variate Scan}}
Given the decomposed Cross-Time Component $\widetilde{\mathcal{X}}\in\mathbb{R}^{N\times T}$ and Cross-Variate Component $\hat{\mathcal{X}}\in\mathbb{R}^{N\times T}$, we tokenize each component into patch
tokens to reduce point-wise redundancy and encode local temporal semantics~\cite{PatchTST, Crossformer}.
Concretely, for each variate $i$, we apply reversible instance normalization (RevIN)~\cite{kim2021reversible} and
partition the normalized series into $L$ contiguous patches of length $P$. A lightweight 1-D convolutional encoder
maps each patch to a $D$-dimensional embedding. To enable complementary dependency modeling, we organize the resulting
embeddings with two scan orders:

(a) \textbf{\textit{Cross-Time Scan (temporal-wise).}}
We preserve the temporal order of patch tokens in the Cross-Time Component and stack all variates in parallel:
\begin{equation}
\widetilde{\mathbf{X}}
=\big[\widetilde{\mathbf{X}}_{1,:};\widetilde{\mathbf{X}}_{2,:};\ldots;\widetilde{\mathbf{X}}_{N,:}\big]
\in\mathbb{R}^{N\times L\times D},
\end{equation}
where $\widetilde{\mathbf{X}}_{i,:}\in\mathbb{R}^{L\times D}$ contains the $L$ patch embeddings of the $i$-th variate.
This scan order enables the temporal module to model \emph{intra-series} dependencies along the temporal axis independently for each variate.

(b) \textbf{\textit{Cross-Variate Scan (variate-wise).}}
We instead group tokens in the Cross-Variate Component by the same patch position and scan across variates:
\begin{equation}
\hat{\mathbf{X}}
=\big(\hat{\mathbf{X}}_{:,1},\hat{\mathbf{X}}_{:,2},\ldots,\hat{\mathbf{X}}_{:,L}\big)
\in\mathbb{R}^{L\times N\times D},
\end{equation}
where $\hat{\mathbf{X}}_{:,l}=\big(\hat{\mathbf{x}}_{1,l},\ldots,\hat{\mathbf{x}}_{N,l}\big)\in\mathbb{R}^{N\times D}$
collects the embeddings of all $N$ variates at the $l$-th token step. This scan order preserves the chronological order over token indices and facilitates modeling
time-varying cross-variate interactions at different token steps.

These two scanned representations, $\widetilde{\mathbf{X}}$ and $\hat{\mathbf{X}}$, are then fed into subsequent modules
to learn cross-time and cross-variate dependencies, respectively.

\subsubsection{\textbf{Cross-Time Dependencies Modeling via Mamba-SSD}}

Given the temporal scanned embeddings $\widetilde{\mathbf{X}}\in\mathbb{R}^{N\times L\times D}$ of the Cross-Time component, our goal is to capture \emph{intra-series} temporal dependencies within each variate.
We therefore adopt an \textbf{\textit{independent and parallel}} strategy: each variate is modeled without cross-variate coupling, enabling efficient parallel computation across $N$ variates. The proposed Mamba-SSD module consists of four stages: (a) two-branch gating and local mixing, (b) selective SSM parameterization, (c) Structured State Space Duality (SSD) based parallel scan, and (d) gated output projection.

\textit{(a) \textbf{Two-branch Gating and Local Mixing.}}
We first apply linear projections to obtain a content branch $\widetilde{\mathbf{X}}\in\mathbb{R}^{N\times L\times D_u}$ and a gate branch $\widetilde{\mathbf{X}}_{\text{gate}}\in\mathbb{R}^{N\times L\times D_u}$. The gate branch serves as a residual controller that adaptively filters and reweights the content features.
We then perform local temporal mixing by applying a lightweight 1-D convolution along the token axis on the content branch (for each variate in parallel), which injects an explicit short-range inductive bias to model local continuity that is common in real-world time series, and complements the long-range selective SSM by providing locally aggregated features that stabilize token-dependent parameter generation.

\textit{(b) \textbf{Selective SSM parameterization.}}
Following Mamba-style selective SSMs, we generate token-dependent parameters from $\widetilde{\mathbf{X}}$.
Let $D_h$ denote the state dimension. The step size and the output projections are computed as
\begin{equation}
\label{eq:mssd-params-batch}
\begin{split}
    &\Delta=\operatorname{softplus}(\widetilde{\mathbf{X}}\mathbf{W}_{\Delta}+\mathbf{b}_{\Delta})\in\mathbb{R}^{N\times L\times D_h},\\
&\mathbf{B}=\widetilde{\mathbf{X}}\mathbf{W}_{B}+\mathbf{b}_{B}\in\mathbb{R}^{N\times L\times D_h},\\
&\mathbf{C}=\widetilde{\mathbf{X}}\mathbf{W}_{C}+\mathbf{b}_{C}\in\mathbb{R}^{N\times L\times D_h},
\end{split}
\end{equation}
where $\mathbf{W}_{\Delta},\mathbf{W}_{B},\mathbf{W}_{C}\in\mathbb{R}^{D_u\times D_h}$.
To ensure stable dynamics, we parameterize the continuous-time transition with a negative diagonal vector
$\mathbf{A}=-\exp(\mathbf{A}_{\log})\in\mathbb{R}^{D_h}$ and discretize it using $\Delta$:
\begin{equation}
\label{eq:mssd-discretize-batch}
\begin{split}
&\overline{\mathbf{A}}=\exp(\Delta\odot \mathbf{A})\in\mathbb{R}^{N\times L\times D_h},\\
&\overline{\mathbf{B}}=(1-\bar{\mathbf{A}})\odot \mathbf{B}\in\mathbb{R}^{N\times L\times D_h},
\end{split}
\end{equation}
where $\odot$ denotes element-wise multiplication. Here, $\Delta$ controls the effective memory scale at each token, while
$\mathbf{B}$ and $\mathbf{C}$ modulate how content is written into and read from the latent state, yielding content-adaptive dynamics.

\textit{(c) \textbf{SSD-based Parallel Computation.}} Given the input token sequence $\widetilde{\mathbf{X}} \in \mathbb{R}^{N\times L \times D_u}$, the selective SSM at step $t$ can be formulated as follows:
\begin{equation}
\begin{split}
h_t &= \overline{\mathbf{A}}_t h_{t-1} + \overline{\mathbf{B}}_t \widetilde{\mathbf{X}}_t,  \\
\mathbf{Y}_t &= \mathbf{C}_t^\top h_t,
\label{eq:SSM-2}
\end{split}
\end{equation}
which yields $\mathbf{Y}\in\mathbb{R}^{N\times L\times D_h}$.
Equivalently, the induced linear operator is \emph{block-diagonal} across variates by leveraging Structured State-Space Duality (SSD) properties in Mamba-2~\cite{dao2024transformers}. Specifically, the selective SSM model presented in Eq.~(\ref{eq:SSM-2}) can be reformulated as:
\begin{equation}
\begin{split}
\mathbf{Y}_{\text{time}} &= \operatorname{SSD}(\mathbf{A}, \mathbf{B}, \mathbf{C})(\widetilde{\mathbf{X}})=\mathbf{M}\widetilde{\mathbf{X}} \\
\text{s.t.}~\mathbf{M} &=
\begin{pmatrix}
\mathbf{C}_0^\top \mathbf{A}_{0:0} \mathbf{B}_0 & & & &\\
\mathbf{C}_1^\top \mathbf{A}_{1:0} \mathbf{B}_0 & \mathbf{C}_1^\top \mathbf{A}_{1:1} \mathbf{B}_1 & & & \\
\ldots & \ldots & \ldots & \ldots &  \\
\mathbf{C}_j^\top \mathbf{A}_{j:0} \mathbf{B}_0 & \mathbf{C}_j^\top \mathbf{A}_{j:1} \mathbf{B}_1 & \ldots & \mathbf{C}_j^\top \mathbf{A}_{j:i} \mathbf{B}_i  \\
\end{pmatrix} ,
\end{split}
\end{equation}
where $\mathbf{M}$ denotes the matrix form of SSMs that uses the sequentially semiseparable representation, and $\mathbf{M}_{ji} = \mathbf{C}_j^\top  \mathbf{A}_{j:i} \mathbf{B}_i$
represents the mapping from the $i$-th input token to the $j$-th output token. Here, $\mathbf{B}_i$ and $\mathbf{C}_j$ are the selective state-space parameters associated with the $i$-th and $j$-th tokens, respectively, and $\mathbf{A}_{j:i}$ denotes the product of the transition terms from step $i$ to $j$.

By representing state-space models as semiseparable matrices via matrix transformations, the structured state-space model enables block-based matrix multiplication, allowing content-aware modeling analogous to attention while achieving subquadratic-time computation.
Building on this insight, we perform the SSD operator in parallel over all $N$ variates:
\begin{equation}
\label{eq:mssd-blockdiag}
\begin{split}
\mathbf{Y}_{\text{time}}&=\operatorname{SSD}(\mathbf{A},\mathbf{B},\mathbf{C})(\widetilde{\mathbf{X}})\\
&=\big(\mathbf{M}_1\widetilde{\mathbf{X}}_{1,:};\ \mathbf{M}_2\widetilde{\mathbf{X}}_{2,:};\ \ldots;\ \mathbf{M}_N\widetilde{\mathbf{X}}_{N,:}\big),
\end{split}
\end{equation}
where each $\mathbf{M}_n$ is a semiseparable operator acting only on the $n$-th variate sequence. Notably, during this process, we prevent any cross-series information flow by re-initializing (i.e., zeroing) the hidden state at the beginning of each series, thereby forcing the Mamba-SSD path to focus exclusively on temporal modeling. This explicitly enforces that the Cross-Time branch captures temporal dynamics \emph{within} each variate while remaining fully parallelizable across $N$ variates.

\textit{(d) \textbf{Gated Output Projection.}} Finally, we combine the SSD output $\mathbf{Y}_{\text{time}}$ from the content branch with the gate branch to obtain the final output:
\begin{equation}
    \widetilde{\mathbf{Y}} =\operatorname{Linear} (\mathbf{Y}_{\text{time}} \odot \sigma(\widetilde{\mathbf{X}}_{\text{gate}})),
\end{equation}
where $\sigma(\cdot)$ is the sigmoid function and $\odot$ denotes element-wise multiplication. The purpose of this gated projection is to provide an explicit, token-adaptive modulation on the propagated state features, selectively enhancing informative temporal responses while attenuating irrelevant or noisy activations, thereby improving robustness and expressiveness without introducing cross-variational coupling. Thus, the overall process in the Mamba-SSD module is:
\begin{equation}
\label{eq:mssd-out-batch}
\widetilde{\mathbf{Y}}=\operatorname{Mamba\mbox{-}SSD}(\widetilde{\mathbf{X}}).
\end{equation}

\subsubsection{\textbf{Cross-Variate Dependencies Modeling via Mamba-DALA}}

A critical yet often overlooked factor in multivariate time series modeling is the \textit{delay dependency}: variations in one variate may influence another only after a \emph{time lag}. 
For instance, in traffic forecasting, congestion at an intersection typically propagates to nearby intersections with a non-negligible delay rather than instantaneously. 
This lagged propagation property makes forecasting
more challenging.
Accurately capturing such delayed interactions is essential for both predictive performance and practical utility, yet this effect is often overlooked in existing time series models. 

To efficiently capture cross-variate dependencies in the Cross-Variate component $\hat{\mathbf{X}}\in\mathbb{R}^{L\times N\times D}$ while explicitly accounting for delay effect, we develop a Mamba-style mixing block that replaces the recurrent SSM in the Mamba backbone with a \textbf{Delay-Aware Linear Attention (DALA)} operator, termed \textbf{Mamba-DALA}. 
Built upon Mamba’s efficient block design~\cite{han2024demystify}, Mamba-DALA performs token-wise pairwise interaction modeling across variates, yielding delay-aware cross-variate aggregation among multiple correlated latent series.

\textit{(a) \textbf{Mamba-style Gating and Parameterization.}}
Following the two-branch design in Mamba, we apply linear projections to obtain a content branch $\hat{\mathbf{X}}\in\mathbb{R}^{L\times N\times D_u}$ and a gate branch $\hat{\mathbf{X}}_{\text{gate}}\in\mathbb{R}^{L\times N\times D_u}$.
We further construct the linear-attention operator by
$\hat{\mathbf{X}}^{q}=\hat{\mathbf{X}}\mathbf{W}_Q,\quad
\hat{\mathbf{X}}^{k}=\hat{\mathbf{X}}\mathbf{W}_K,\quad
\hat{\mathbf{X}}^{v}=\hat{\mathbf{X}}\mathbf{W}_V$,
where $\mathbf{W}_Q,\mathbf{W}_K,\mathbf{W}_V\in\mathbb{R}^{D_u\times D_u}$.

\textit{(b) \textbf{Delay-Aware Linear Attention (DALA).}}
To robustly encode propagation delays, we consider two delay signals: a \emph{global correlation delay} and a \emph{token-level relative delay}. 
The key idea is to first correct coarse cross-series misalignment with a global offset, and then explicitly model the relative temporal distance between interacting tokens. In this way, the model can both identify the most plausible cross-series alignment and preserve ordered, causal interactions at the token level. Accordingly, we first estimate a correlation-based delay by maximizing the cross-correlation between variates, which serves as an explicit prior for cross-series alignment. Importantly, this correlation-based delay is not imposed as a hard alignment constraint. Instead, it provides a coarse global prior. At the same time, the final delay-aware interaction is further refined by token-level relative delay encoding via Rotary Position Embedding (RoPE)~\cite{su2024roformer}, which captures time lags between query-key token pairs.

\noindent\textbf{\textit{Global correlation delay.}} 
we first estimate the global propagation delay by maximizing the cross-correlation~\cite{azaria1984time} between latent correlated series $\mathcal{X}_{a,:}\in\mathbb{R}^{T}$ and $\mathcal{X}_{b,:}\in\mathbb{R}^{T}$ via temporal shifting~\cite{long2024unveiling}. Meanwhile, we quantify the \emph{correlation strength} as the peak correlation value:
\begin{equation}
\label{mcc}
\begin{split}
    \tau_{ab} &= \arg \max_t \textit{corr}(\mathcal{X}_{a,:} \overset{t}{\rightarrow} \mathcal{X}_{b,:}), \\
\rho_{ab}& =\textit{corr}(\mathcal{X}_{a,:} \overset{\tau_{ab}}{\rightarrow} \mathcal{X}_{b,:}),
\end{split}
\end{equation}
where $\overset{t}{\rightarrow}$ denotes a $t$-step shift applied to $\mathcal{X}_{a,:}$, and $\textit{corr}(\cdot)$ represents the Pearson correlation function. The strength $\rho_{ab}$ is used to weight cross-variate aggregation, so that more strongly correlated pairs contribute more. In contrast, noisier or less reliable pairs are naturally downweighted, reducing the influence of noisy or weakly correlated variate pairs. Since $\tau_{ab}$ is defined in the original time point scale, we map it to the token scale as an integer shift:
\begin{equation}
\Delta_{ab}=\mathrm{round}\left(\frac{\tau_{ab}}{P}\right)\in\mathbb{Z},
\label{eq:tau_to_delta}
\end{equation}
where $P$ is the number of time points per token. Here, $\Delta_{ab}>0$ indicates that series $a$ lags behind series $b$ on the token scale.

\begin{equation}
\Delta_{ab}=\mathrm{round}\left(\frac{\tau_{ab}}{P}\right)\in\mathbb{Z},
\label{eq:tau_to_delta}
\end{equation}
where $P$ is the time points per token, $\Delta_{ab}>0$ indicates that series $a$ lags behind series $b$ on the token scale.

\noindent\textbf{\textit{Token-level relative delay.}}
For a query token $\hat{\mathbf{X}}_{a,l}^{q}$ at position $l$ and a key token $\hat{\mathbf{X}}_{b,j}$ at position $j$, RoPE parameterizes their token-wise relative delay $(l-j)$ as
\begin{equation}
\mathcal{R}_{l-j}^{\Theta}=\mathcal{R}_l^{\Theta}(\mathcal{R}_j^{\Theta})^T,
\end{equation}
where $\mathcal{R}_l^{\Theta}$ and $\mathcal{R}_j^{\Theta}$ denote rotary position embeddings for the $l$-th and $j$-th tokens, respectively, and $\Theta$ is the shared predefined RoPE parameterization~\cite{su2024roformer}. To incorporate the global token shift, we align the key token $\hat{\mathbf{X}}_{b,j}$ from series $b$ to the effective position $j+\Delta_{ab}$ when interacting with series $a$. Consequently, the query-key interaction is governed by a unified effective delay:
\begin{equation}
\delta^{a\leftarrow b}_{l,j}=l-(j+\Delta_{ab})=(l-j)-\Delta_{ab},
\label{eq:effective_delay}
\end{equation}
which jointly accounts for the global propagation offset and the local token-wise lag.

\noindent\textbf{\textit{Delay-aware attention aggregation.}}
Given a query token $\hat{\mathbf{X}}_{a,l}^{q}$ at position $l$ in series $a$, we allow it to attend to key tokens $\hat{\mathbf{X}}_{b,j}^{k}$ from all variates $b\in\{1,\ldots,N\}$, while enforcing the causal constraint $j+\Delta_{ab}\le l$ to avoid information leakage. We inject the effective delay by applying RoPE to the query at position $l$ and to the key at position $j+\Delta_{ab}$, yielding $\mathcal{R}_l^{\Theta}$ and $\mathcal{R}_{j+\Delta_{ab}}^{\Theta}$, respectively. The delay-aware linear attention output $\mathbf{y}_{a,l}\in\mathbb{R}^{D_u}$ is computed as
\begin{align}
\label{eq:attention}
\mathbf{y}_{a,l}
=
\mathcal{R}_l^{\Theta}\phi(\hat{\mathbf{X}}_{a,l}^q)\,
\frac{
\sum_{b=1}^{N}\rho_{ab}\sum_{j:\, j+\Delta_{ab}\le l}
\big(\mathcal{R}_{j+\Delta_{ab}}^{\Theta}\phi(\hat{\mathbf{X}}_{b,j}^k)\big)^{\top}
\hat{\mathbf{X}}_{b,j}^v
}{
\phi(\hat{\mathbf{X}}_{a,l}^q)
\sum_{b=1}^{N}\rho_{ab}\sum_{j:\, j+\Delta_{ab}\le l}
\phi(\hat{\mathbf{X}}_{b,j}^k)^{\top}
},
\end{align}
where $\phi(\cdot)$ is the kernel function in~\cite{han2023flatten}, i.e., $\phi(x)=f(\mathrm{ReLU}(x))$ with $f(x)=\frac{\|x\|}{\|x^{**p}\|}x^{**p}$, and $x^{**p}$ denotes the element-wise power $p$. This kernelization enhances the expressiveness of linear attention by amplifying informative query-key similarity patterns, while $\rho_{ab}$ further emphasizes reliable cross-variate interactions. Applying Eq.~\eqref{eq:attention} to all tokens yields the DALA output
$\mathbf{Y}_{\text{variate}}\in\mathbb{R}^{L\times N\times D_u}$, which captures delay-aware cross-variate dependencies among all patched series tokens.
Overall, the robustness of DALA stems from three aspects: (i) the global delay provides coarse alignment, (ii) token-level relative delay encoding further refines interactions locally, and (iii) correlation-strength weighting suppresses unreliable variate pairs.

\textit{(c) \textbf{Gated Output Projection.}}
Following Mamba’s gated mixing mechanism, we modulate $\mathbf{Y}_{\text{variate}}$ with the gate branch $\hat{\mathbf{X}}_{\text{gate}}\in\mathbb{R}^{L\times N\times D_u}$ through an element-wise multiplicative gate, and then project it back to the model dimension:
\begin{equation}
\label{eq:mamba_dala_out}
\hat{\mathbf{Y}} = \operatorname{Linear}\big(\mathbf{Y}_{\text{variate}}\odot \sigma(\hat{\mathbf{X}}_{\text{gate}})\big),
\end{equation}
where $\odot$ denotes element-wise multiplication and $\sigma(\cdot)$ is the gating activation. Therefore, the overall Mamba-DALA module can be summarized as
\begin{equation}
\label{eq:mdala-out-batch}
\hat{\mathbf{Y}}=\operatorname{Mamba-DALA}(\hat{\mathbf{X}}).
\end{equation}

\subsubsection{\textbf{DuoMNet Block Design}}

As illustrated in Fig.~\ref{workflow}, each \textbf{DuoMNet} block consists of two complementary paths:
(i) a \emph{temporal path} implemented by Mamba-SSD to capture cross-time dependencies, and
(ii) a \emph{variate path} implemented by Mamba-DALA to model delay-aware cross-variate dependencies. We update the input $\widetilde{\mathbf{X}}^{b+1}$ and $\hat{\mathbf{X}}^{b+1}$ of both paths via residual connections and pass them to the next block:
\begin{equation}
\label{eq:duomnet_state_update}
\begin{split}
\widetilde{\mathbf{X}}^{b+1} &= \widetilde{\mathbf{X}}^{b} - \widetilde{\mathbf{Y}}^{b}=\widetilde{\mathbf{X}}^{b} -\operatorname{Mamba\mbox{-}SSD}(\widetilde{\mathbf{X}}^{b}),\\
\hat{\mathbf{X}}^{b+1} &= \hat{\mathbf{X}}^{b} - \hat{\mathbf{Y}}^{b}=\hat{\mathbf{X}}^{b} -\operatorname{Mamba\mbox{-}DALA}(\hat{\mathbf{X}}^{b}).
\end{split}
\end{equation}

To integrate the two complementary outputs, we first apply LayerNorm to each path and then perform a weighted fusion:
\begin{equation}
\label{eq:duomnet_mix}
\mathbf{U}^{b} = \alpha\,\operatorname{LN}(\widetilde{\mathbf{Y}}^{b}) + \beta\,\operatorname{LN}(\hat{\mathbf{Y}}^{b}),
\end{equation}
where $\alpha$ and $\beta$ are mixing hyperparameters.
We then apply a feed-forward network (FFN) to the mixed representation and use a residual connection followed by LN to obtain the block output:
\begin{equation}
\label{eq:duomnet_out}
\mathbf{Z}^{b} = \operatorname{LN}\!\Big(\mathbf{U}^{b} + \operatorname{FFN}(\mathbf{U}^{b})\Big).
\end{equation}
Here, $\mathbf{Z}^{b}$ is the output of the $b$-th DuoMNet block. By stacking $B$ DuoMNet blocks, we obtain $\{\mathbf{Z}^{b}\}_{b=1}^{B}$.
Following a layer-wise aggregation strategy, the final representation is computed as:
\begin{equation}
\label{eq:duomnet_final}
\mathbf{Z}=\sum_{b=1}^{B}\mathbf{Z}^{b}.
\end{equation}

\subsection{\textbf{Task-specific Projection}}

To adapt \ourname{} to diverse downstream tasks (e.g., forecasting, imputation, anomaly detection, and classification), we attach a lightweight task head on top of the final task-agnostic representation $\mathbf{Z}\in\mathbb{R}^{N\times L\times D}$ to learn task-specific targets.
For regression-oriented tasks such as forecasting, imputation, and anomaly detection, we employ an MLP as the task head
$\mathcal{Y}=\operatorname{MLP}(\mathbf{Z})$.
For forecasting, the prediction is $\mathcal{Y}\in\mathbb{R}^{N\times S}$, where $S$ is the prediction horizon.
For imputation and anomaly detection, the head produces point-wise outputs over the lookback window, i.e., $\mathcal{Y}\in\mathbb{R}^{N\times T}$, where $T$ is the lookback length.
For series-level classification, we first aggregate token-wise representations into a global feature, and then apply an MLP classifier with softmax 
$\mathcal{Y}=\operatorname{softmax}\big(\operatorname{MLP}(\mathbf{Z})\big)$,
where $\mathcal{Y}\in\mathbb{R}^{1\times C}$ and $C$ is the number of classes.

\subsection{\textbf{Training Process and Complexity Analysis}}
The overall training procedure is summarized in Algorithm~\ref{algo:dema}. 
The computational cost of \ourname{} is dominated by two parallel modules, Mamba-SSD and Mamba-DALA. Let $L$ be the number of patches and $D$ the model dimension. For Mamba-SSD, the Cross-Time scan yields $\widetilde{\mathbf{X}}\in\mathbb{R}^{N\times L\times D}$, which can be viewed as a scanned sequence of length $NL$ for SSD-based selective state-space computation. The resulting FLOPs scale linearly with the scanned length and quadratically with the feature dimension, giving a per-block complexity of $\mathcal{O}((NL)D^{2})$. Similarly, Mamba-DALA operates on $\hat{\mathbf{X}}\in\mathbb{R}^{L\times N\times D}$, which can equivalently be viewed as a sequence of length $LN$. Its core delay-aware linear mixing is implemented via linear attention and thus also requires $\mathcal{O}((LN)D^{2})$ FLOPs per block. Since the two paths are computed in parallel, the overall complexity per DuoMNet block remains $\mathcal{O}\big((NL)D^{2}\big)$. Consequently, \ourname{} achieves linear-time complexity with respect to the effective scanned length $NL$, enabling efficient modeling for long-horizon and large-scale multivariate time series in regimes where the effective token length dominates the feature dimension, i.e., $N, L \gg D$. Therefore, although DeMa contains multiple modules, its dominant computation remains concentrated in two linear-time paths, making the overall architecture computationally practical.

\begin{algorithm}[!h]
\caption{\textbf{DeMa}}
\label{algo:dema}
\KwIn{
Raw MTS window $\mathcal{X}\in\mathbb{R}^{N\times T}$;\ top-ratio $\theta$;\ patch length $P$;\ \#blocks $B$;\ fusion weights $(\alpha,\beta)$;\ task head $\text{MLP}(\cdot)$.
}
\KwOut{Task output $\mathcal{Y}$ and model $\mathcal{F}_\Theta$.}

\tcc{Adaptive Fourier Filter (AFF)}
$(\widetilde{\mathcal{X}},\hat{\mathcal{X}})\leftarrow \mathrm{AFF}(\mathcal{X};\theta)$ (Eq.~\eqref{eq:aff});

\tcc{Tokenization + Scan}
$\widetilde{\mathbf{X}}^{1}\leftarrow \mathrm{CrossTimeScan}(\mathrm{PatchEmbed}(\widetilde{\mathcal{X}};P))\in\mathbb{R}^{N\times L\times D}$;\\
$\hat{\mathbf{X}}^{1}\leftarrow \mathrm{CrossVariateScan}(\mathrm{PatchEmbed}(\hat{\mathcal{X}};P))\in\mathbb{R}^{L\times N\times D}$;

\tcc{Delay priors}
Compute $(\tau_{ab},\rho_{ab})$ by Eq.~\eqref{mcc} and $\Delta_{ab}$ by Eq.~\eqref{eq:tau_to_delta} for all series pair $(a,b)$;

\tcc{Stacked DuoMNet Blocks}
\For{$b=1,2,\ldots,B$}{
    \tcc{Parallel paths (Temporal \& Variate)}
    $\widetilde{\mathbf{Y}}^{b}\leftarrow \mathrm{Mamba\mbox{-}SSD}(\widetilde{\mathbf{X}}^{b})$
    (Eqs.~\eqref{eq:mssd-params-batch}-~\eqref{eq:mssd-out-batch});\\
    $\hat{\mathbf{Y}}^{b}\leftarrow \mathrm{Mamba\mbox{-}DALA}(\hat{\mathbf{X}}^{b};\{\Delta_{ab}\},\{\rho_{ab}\})$
    (Eqs.~\eqref{eq:effective_delay}- ~\eqref{eq:mdala-out-batch});

    \tcc{Residual update}
    $(\widetilde{\mathbf{X}}^{b+1},\hat{\mathbf{X}}^{b+1})
    \leftarrow(\widetilde{\mathbf{X}}^{b}-\widetilde{\mathbf{Y}}^{b},\ \hat{\mathbf{X}}^{b}-\hat{\mathbf{Y}}^{b})$
    (Eq.~\eqref{eq:duomnet_state_update});

    \tcc{Fusion + FFN}
    $\mathbf{U}^{b}\leftarrow \alpha\,\mathrm{LN}(\widetilde{\mathbf{Y}}^{b})+\beta\,\mathrm{LN}(\hat{\mathbf{Y}}^{b})$
    (Eq.~\eqref{eq:duomnet_mix});\\
    $\mathbf{Z}^{b}\leftarrow \mathrm{LN}\big(\mathbf{U}^{b}+\mathrm{FFN}(\mathbf{U}^{b})\big)$
    (Eq.~\eqref{eq:duomnet_out});
}

\tcc{Aggregation + Task head}
$\mathbf{Z}\leftarrow \sum_{b=1}^{B}\mathbf{Z}^{b}$ (Eq.~\eqref{eq:duomnet_final});\\
$\mathcal{Y}\leftarrow \text{MLP}(\mathbf{Z})$;

\Return{$\mathcal{Y},\ \mathcal{F}_\Theta$}
\end{algorithm}

\section{Experiments}
\label{sec:evaluation}

\begin{table*}[t]
\caption{Summary of experimental datasets. \#Variates denotes the number of variables in each multivariate time series.}
\label{tab:benchmarks}
\setlength{\abovecaptionskip}{0cm}
\setlength{\belowcaptionskip}{0cm}
\small
\renewcommand{\multirowsetup}{\centering}
\setlength{\tabcolsep}{2.2pt}
\renewcommand{\arraystretch}{0.76}
\begin{tabularx}{\textwidth}{
>{\raggedright\arraybackslash}p{0.15\textwidth}
>{\raggedright\arraybackslash}p{0.16\textwidth}
>{\centering\arraybackslash}p{0.08\textwidth}
>{\raggedright\arraybackslash}X
>{\centering\arraybackslash}p{0.10\textwidth}
>{\raggedright\arraybackslash}p{0.15\textwidth}}
\toprule
Task & Dataset & \#Variates &  (Train, Validation, Test) Size & Frequency & Domain \\
\midrule

\multirow{5}{*}{Long-term }
& ETTh1, ETTh2      & 7   & (8545, 2881, 2881)     & 1 hour  & Electricity \\
& ETTm1, ETTm2      & 7   & (34465, 11521, 11521)  & 15 min  & Electricity \\
& Electricity       & 321 & (18317, 2633, 5261)    & 1 hour  & Electricity \\
Forecasting& Weather           & 21  & (36792, 5271, 10540)   & 10 min  & Environment \\
& Traffic           & 862 & (12185, 1757, 3509)    & 1 hour  & Transportation \\
& Exchange          & 8   & (5120, 665, 1422)      & 1 day   & Economic \\
& Solar-Energy      & 137 & (36601, 5161, 10417)   & 10 min  & Energy \\
\midrule

\multirow{3}{*}{Short-term }
& PEMS03            & 358 & (15617, 5135, 5135)    & 5 min   & Transportation \\
& PEMS04            & 307 & (10172, 3375, 3375)    & 5 min   & Transportation \\
Forecasting& PEMS07            & 883 & (16911, 5622, 5622)    & 5 min   & Transportation \\
& PEMS08            & 170 & (10690, 3548, 3548)    & 5 min   & Transportation \\
\midrule

\multirow{4}{*}{Imputation}
& ETTh1, ETTh2      & 7   & (8545, 2881, 2881)     & 1 hour  & Electricity \\
& ETTm1, ETTm2      & 7   & (34465, 11521, 11521)  & 15 min  & Electricity \\
& Electricity       & 321 & (18317, 2633, 5261)    & 1 hour  & Electricity \\
& Weather           & 21  & (36792, 5271, 10540)   & 10 min  & Environment \\
\midrule

\multirow{3}{*}{Anomaly }
& SMD   & 38 & (566724, 141681, 708420) & 1 min & Server Machine \\
& MSL   & 55 & (44653, 11664, 73729)    & --    & Spacecraft \\
Detection& SMAP  & 25 & (108146, 27037, 427617)  & --    & Spacecraft \\
& SWaT  & 51 & (396000, 99000, 449919)  & 1 sec & Infrastructure \\
& PSM   & 25 & (105984, 26497, 87841)   & --    & Server Machine \\
\midrule

\multirow{10}{*}{Classification}
& EthanolConcentration & 3   & (261, 0, 263)    & -- & Chemistry \\
& FaceDetection        & 144 & (5890, 0, 3524)  & -- & Neuroscience \\
& Handwriting          & 3   & (150, 0, 850)    & -- & Motion \\
& Heartbeat            & 61  & (204, 0, 205)    & -- & Health \\
& JapaneseVowels       & 12  & (270, 0, 370)    & -- & Speech \\
& PEMS-SF              & 963 & (267, 0, 173)    & -- & Transportation \\
& SelfRegulationSCP1   & 6   & (268, 0, 293)    & -- & Health \\
& SelfRegulationSCP2   & 7   & (200, 0, 180)    & -- & Health \\
& SpokenArabicDigits   & 13  & (6599, 0, 2199)  & -- & Speech \\
& UWaveGestureLibrary  & 3   & (120, 0, 320)    & -- & Gesture \\
\bottomrule
\end{tabularx}
\end{table*}

In this section, we investigate the following research questions to validate the effectiveness, efficiency, and scalability of \ourname{}:

\begin{itemize}[leftmargin=*]
\item \textbf{RQ1}: How does \ourname{} perform across five representative time series tasks?
\item \textbf{RQ2}: Does \ourname{} achieve a favorable accuracy-efficiency trade-off as a general backbone for multivariate time-series analysis?
\item \textbf{RQ3}: How does \ourname{} scale with increasing input length in terms of per-iteration runtime and GPU memory, compared with Transformer-based baselines?
\item \textbf{RQ4}: How does each key component contribute to the overall performance of \ourname{}?
\item \textbf{RQ5}: How sensitive is \ourname{} to major hyperparameters?
\end{itemize}

\subsection{\textbf{Experimental Setup}}
\subsubsection{\textbf{Datasets}} 
To validate the effectiveness of \ourname{}, we conduct extensive experiments across five mainstream time-series analysis tasks: long-term forecasting, short-term forecasting, imputation, classification, and anomaly detection. All datasets are drawn from the benchmark collections in the Time Series Library (TSLib)~\cite{wang2024deep}. Table~\ref{tab:benchmarks} summarizes the datasets used for each task.

\begin{itemize}[leftmargin=*]
    \item \textit{Forecasting.} We evaluate both long-term and short-term forecasting. 
    For long-term forecasting, we adopt widely used benchmarks including Electricity~\cite{li2019enhancing}, ETT with four subsets (ETTh1, ETTh2, ETTm1, ETTm2)~\cite{Informer}, Exchange~\cite{LSTNet}, Traffic~\cite{wu2021autoformer}, Weather~\cite{wu2021autoformer}, and Solar-Energy~\cite{LSTNet}. 
    For short-term forecasting, we use four traffic datasets from the PEMS family~\cite{chen2001freeway}: PEMS03, PEMS04, PEMS07, and PEMS08.
    
    \item \textit{Imputation.} Time-series imputation aims to recover missing values from contextual observations. We evaluate on ETTh1, ETTh2, ETTm1, ETTm2, Electricity, and Weather. Following TimesNet~\cite{Timesnet}, we randomly mask time points with ratios in $\{12.5\%, 25\%, 37.5\%, 50\%\}$ to assess robustness under varying missing rates.
    
    \item \textit{Anomaly Detection.} Anomaly detection aims to identify abnormal patterns in time series. We adopt five widely used industrial benchmarks: Server Machine Dataset (SMD)~\cite{su2019robust}, Mars Science Laboratory rover (MSL)~\cite{hundman2018detecting}, Soil Moisture Active Passive satellite (SMAP)~\cite{hundman2018detecting}, Secure Water Treatment (SWaT)~\cite{mathur2016swat}, and Pooled Server Metrics (PSM)~\cite{abdulaal2021practical}.
    
    \item \textit{Classification.} For time-series classification, we use ten multivariate datasets from the UEA Time Series Classification Archive~\cite{bagnall2018uea}, covering diverse domains such as gesture recognition, action recognition, audio recognition, and medical diagnosis.
\end{itemize}

\subsubsection{\textbf{Baselines}}
To ensure a comprehensive comparison, we include a broad set of strong baselines that are representative of the latest advances in the time series community. Specifically, our baselines span four families:
(1) Mamba-based models: Affirm~\cite{wu2025affirm}, S-Mamba~\cite{wang2024mamba}, CMamba~\cite{cmamba}, and SAMBA~\cite{samba};
(2) Transformer-based models: iTransformer~\cite{liuitransformer}, Crossformer~\cite{Crossformer}, and PatchTST~\cite{PatchTST};
(3) MLP-based models: TimeMixer~\cite{timemixer}, DLinear~\cite{DLinear}, and RLinear~\cite{li2023revisiting}; and
(4) TCN-based models: ModernTCN~\cite{luo2024moderntcn} and TimesNet~\cite{Timesnet}.

\subsection{\textbf{Metrics and Implementation Details}}
To ensure a fair and comprehensive comparison, we follow the experimental protocol of the well-established Time Series Library~\cite{wang2024deep}. 
For long-term forecasting and imputation, we report mean squared error (MSE) and mean absolute error (MAE). 
For time-series classification, we report accuracy. 
For anomaly detection, we report the F1-score, which balances precision and recall and is well-suited to the highly imbalanced nature of anomalous events. 
In each table, the best and second-best results are highlighted in \boldres{red} and \secondres{underlined}, respectively.

We use the published optimal hyperparameter settings for all baselines. All experiments are implemented in PyTorch and conducted on three NVIDIA RTX A6000 GPUs (48GB). We train all models with the Adam optimizer~\cite{Adam} under an $\ell_2$ loss, and tune the initial learning rate in $\{1\mathrm{e}{-4},,5\mathrm{e}{-4},,1\mathrm{e}{-3}\}$. The number of DuoMNet blocks is selected from $\{2,3,4,5\}$ via hyperparameter search, and the representation dimension is chosen from $\{32,64,128,256\}$. We set the batch size to 32 and train for 50 epochs. For the Adaptive Fourier Filter, the top-ratio $\theta$ is selected from a limited candidate set, and we use $\theta=60\%$ in the final configuration. We further conduct a grid search over the fusion weights $\alpha,\beta\in\{0.2,0.4,0.5,0.6,0.8\}$, and use $(\alpha,\beta)=(0.6,0.4)$ for forecasting and classification, and $(\alpha,\beta)=(0.2,0.8)$ for imputation and anomaly detection. For tokenization, we fix the patch size and stride to $P=S=8$. DeMa follows a modular pipeline. Once the temporal path (Mamba-SSD) and the variate path (Mamba-DALA) are instantiated, the model mainly repeats the same DuoMNet block, which helps keep the implementation structured and reproducible.

\begin{table*}[!h]
  \caption{  
  Long-term forecasting results with lookback length $T=96$ and prediction length $S\in\{96, 192, 336, 720\}$. Lower MSE or MAE is better.}
  \label{tab:long_term_result}
  \renewcommand{\arraystretch}{0.85} 
  \centering
  \resizebox{\textwidth}{!}{
  \begin{threeparttable}
  \begin{small}
  \renewcommand{\multirowsetup}{\centering}
  \setlength{\tabcolsep}{1.45pt}
  % [inline block 0: 1 envs, 50604 chars -> data_tex | \begin{tabular}{cccccccccccccccccccccccccccc}     \toprule...]

    \end{small}
  \end{threeparttable}
}
\end{table*} 
\subsection{\textbf{Main Results Across Five Time-Series Tasks (RQ1)}}

\subsubsection{\textbf{Long-term Forecasting Results}}
Time-series forecasting is a fundamental task in time-series analysis, aiming to predict future values from historical observations. Table~\ref{tab:long_term_result} reports the long-term forecasting results with a fixed lookback length $T=96$ and prediction horizons $S\in\{96,192,336,720\}$. Overall, \ourname{} delivers top-tier accuracy consistently across all datasets and horizons, ranking within the top two in every setting. In particular, \ourname{} achieves the best results in 34/36 cases for MSE and 30/36 cases for MAE, demonstrating strong robustness and generalization across diverse temporal dynamics.

Compared with Transformer-based models such as iTransformer, PatchTST, and Crossformer, \ourname{} consistently performs better across all settings, indicating that explicitly disentangling and modeling temporal and variate dependencies is more effective for long-horizon forecasting than relying primarily on global attention. \ourname{} also consistently surpasses Mamba variants and non-attention baselines. For example, on Traffic (Avg), \ourname{} reduces the MSE from 0.414 to 0.372, achieving a 10.1\% improvement over the strongest Mamba baseline, S-Mamba, and also outperforms representative MLP- and TCN-based models, including TimeMixer (0.484) and ModernTCN (0.546). These gains can be attributed to DeMa’s comprehensive dependency modeling: Mamba-SSD efficiently captures long-range temporal dependencies through parallel selective state-space computation, while Mamba-DALA enhances cross-variate interaction modeling. Their synergy enables \ourname{} to exploit long-range temporal structures and inter-series correlations jointly, yielding consistently strong performance in long-term forecasting. 
We note that DeMa remains highly competitive even on datasets with only a few variates. As summarized in Table~\ref{tab:benchmarks}, ETTh1, ETTh2, ETTm1, and ETTm2 each contain 7 variates, while Exchange contains 8 variates. On these small-scale benchmarks, DeMa still achieves best or near-best averaged results in Table~\ref{tab:long_term_result}. These results indicate that the benefit of DeMa persists in small-scale scenarios, though its relative advantage becomes more pronounced as scalability pressure increases.

We further examine DeMa under relatively weak inter-variable dependencies, as analyzed on ETTh2, ETTm1, and ETTm2. Even in these settings, DeMa remains highly competitive, achieving the best average MSE on ETTm1 (0.367) and ETTm2 (0.270), while maintaining comparable performance on ETTh2 (0.370). This observation suggests that when the correlation-based delay prior becomes less informative, the temporal path can still capture the dominant intra-series dynamics, whereas the variate path provides complementary gains.
\begin{table*}[!h]
  \caption{Short-term forecasting results with lookback length $T=96$ and prediction length $S\in\{12,24,48\}$. Lower MSE or MAE is better.}
  \label{tab:short_term_results}
  \renewcommand{\arraystretch}{0.85} 
  \centering
  \resizebox{\textwidth}{!}{
  \begin{threeparttable}
  \begin{small}
  \renewcommand{\multirowsetup}{\centering}
  \setlength{\tabcolsep}{1pt}
  \begin{tabular}{cccccccccccccccccccccccccccc}
    \toprule
    \multicolumn{2}{c}{\multirow{3}{*}{Models}} &
    \multicolumn{2}{c}{\scalebox{0.9}{\textbf{SSD + DALA}}} &
    \multicolumn{8}{c}{\scalebox{0.8}{SSM-based}} &
    \multicolumn{6}{c}{\scalebox{0.8}{Transformer-based models}} &
    \multicolumn{6}{c}{\scalebox{0.8}{MLP-based models}} &
    \multicolumn{4}{c}{\scalebox{0.8}{TCN-based models}} \\ 
    \cmidrule(lr){3-4} \cmidrule(lr){5-12} \cmidrule(lr){13-18}\cmidrule(lr){19-24}\cmidrule(lr){25-28}

    \multicolumn{2}{c}{} & 
    \multicolumn{2}{c}{\rotatebox{0}{\scalebox{0.9}{\textbf{\ourname{}}}}}&
    
    \multicolumn{2}{c}{\rotatebox{0}{\scalebox{0.8}{Affirm}}} &
    \multicolumn{2}{c}{\rotatebox{0}{\scalebox{0.8}{S-Mamba}}} &
    \multicolumn{2}{c}{\rotatebox{0}{\scalebox{0.8}{CMamba}}} &
    \multicolumn{2}{c}{\rotatebox{0}{\scalebox{0.8}{SAMBA}}} &

    % \multicolumn{2}{c}{\rotatebox{0}{\scalebox{0.8}{FEDformer}}} &
    \multicolumn{2}{c}{\rotatebox{0}{\scalebox{0.8}{iTransformer}}} &
    \multicolumn{2}{c}{\rotatebox{0}{\scalebox{0.8}{Crossformer}}} &
    \multicolumn{2}{c}{\rotatebox{0}{\scalebox{0.8}{PatchTST}}} &

    % \multicolumn{2}{c}{\rotatebox{0}{\scalebox{0.8}{TiDE}}} &
    \multicolumn{2}{c}{\rotatebox{0}{\scalebox{0.8}{TimeMixer}}} &
    \multicolumn{2}{c}{\rotatebox{0}{\scalebox{0.8}{{RLinear}}}} &
    \multicolumn{2}{c}{\rotatebox{0}{\scalebox{0.8}{DLinear}}} &

    % \multicolumn{2}{c}{\rotatebox{0}{\scalebox{0.8}{SCINet}}} &
    \multicolumn{2}{c}{\rotatebox{0}{\scalebox{0.8}{ModernTCN}}} &
    \multicolumn{2}{c}{\rotatebox{0}{\scalebox{0.8}{TimesNet}}}
    \\
     
    \multicolumn{2}{c}{} &
    \multicolumn{2}{c}{\scalebox{0.9}{\textbf{(Ours)}}}&

    \multicolumn{2}{c}{\scalebox{0.8}{\cite{wu2025affirm}}} &
    \multicolumn{2}{c}{\scalebox{0.8}{\cite{wang2024mamba}}} &
    \multicolumn{2}{c}{\scalebox{0.8}{\cite{cmamba}}} &
    \multicolumn{2}{c}{\scalebox{0.8}{\cite{samba}}} &

    % \multicolumn{2}{c}{\scalebox{0.8}{\cite{fedformer}}} &
    \multicolumn{2}{c}{\scalebox{0.8}{\cite{liuitransformer}}} &
    \multicolumn{2}{c}{\scalebox{0.8}{\cite{Crossformer}}} & 
    \multicolumn{2}{c}{\scalebox{0.8}{\cite{PatchTST}}} &

    % \multicolumn{2}{c}{\scalebox{0.8}{\cite{daslong}}} & 
    \multicolumn{2}{c}{\scalebox{0.8}{\cite{timemixer}}} & 
    \multicolumn{2}{c}{\scalebox{0.8}{\cite{li2023revisiting}}} & 
    \multicolumn{2}{c}{\scalebox{0.8}{\cite{DLinear}}} &

    % \multicolumn{2}{c}{\scalebox{0.8}{\cite{SCINet}}} & 
    \multicolumn{2}{c}{\scalebox{0.8}{\cite{luo2024moderntcn}}} &
    \multicolumn{2}{c}{\scalebox{0.8}{\cite{Timesnet}}}
    \\
    
   \cmidrule(lr){3-4} \cmidrule(lr){5-6}\cmidrule(lr){7-8} \cmidrule(lr){9-10}\cmidrule(lr){11-12}\cmidrule(lr){13-14} \cmidrule(lr){15-16} \cmidrule(lr){17-18} \cmidrule(lr){19-20} \cmidrule(lr){21-22} \cmidrule(lr){23-24} \cmidrule(lr){25-26} \cmidrule(lr){27-28} 
    
    \multicolumn{2}{c}{Metric}  
    & \scalebox{0.8}{MSE} & \scalebox{0.8}{MAE} 
    & \scalebox{0.8}{MSE} & \scalebox{0.8}{MAE} 
    & \scalebox{0.8}{MSE} & \scalebox{0.8}{MAE} 
    & \scalebox{0.8}{MSE} & \scalebox{0.8}{MAE} 
    & \scalebox{0.8}{MSE} & \scalebox{0.8}{MAE} 
    & \scalebox{0.8}{MSE} & \scalebox{0.8}{MAE} 
    & \scalebox{0.8}{MSE} & \scalebox{0.8}{MAE} 
    & \scalebox{0.8}{MSE} & \scalebox{0.8}{MAE} 
    & \scalebox{0.8}{MSE} & \scalebox{0.8}{MAE} 
    & \scalebox{0.8}{MSE} & \scalebox{0.8}{MAE} 
    & \scalebox{0.8}{MSE} & \scalebox{0.8}{MAE} 
    & \scalebox{0.8}{MSE} & \scalebox{0.8}{MAE} 
    & \scalebox{0.8}{MSE} & \scalebox{0.8}{MAE} \\
    \toprule
    
    \multirow{4}{*}{\rotatebox{90}{\scalebox{0.95}{PEMS03}}}
    &\scalebox{0.8}{12} 
    % \Our
    &\boldres{\scalebox{0.8}{0.050}} &\boldres{\scalebox{0.8}{0.146}}
    % \SSM
    &\scalebox{0.8}{0.091} &\scalebox{0.8}{0.200}
    &\secondres{\scalebox{0.8}{0.065}} &\secondres{\scalebox{0.8}{0.169}}
    &\scalebox{0.8}{0.068} &\scalebox{0.8}{0.172}
    &\scalebox{0.8}{0.066} &\scalebox{0.8}{0.182}

    &\scalebox{0.8}{0.071} &\scalebox{0.8}{0.174} 
    &\scalebox{0.8}{0.090} &\scalebox{0.8}{0.203} 
    &\scalebox{0.8}{0.099} &\scalebox{0.8}{0.216}

    &\scalebox{0.8}{0.073} &\scalebox{0.8}{0.181}
    &\scalebox{0.8}{0.126} &\scalebox{0.8}{0.236} 
    &\scalebox{0.8}{0.122} &\scalebox{0.8}{0.243}

    &\scalebox{0.8}{0.072} & \scalebox{0.8}{0.186} 
    &\scalebox{0.8}{0.085} &\scalebox{0.8}{0.192} 
    \\
    
    & \scalebox{0.8}{24} 
    % \Our
    &\boldres{\scalebox{0.8}{0.063}} &\boldres{\scalebox{0.8}{0.172}}
    % \SSM
    &\scalebox{0.8}{0.104} &\scalebox{0.8}{0.228}
    &\scalebox{0.8}{0.087} &\secondres{\scalebox{0.8}{0.196}} 
    &\scalebox{0.8}{0.079} &\scalebox{0.8}{0.201}
    &\secondres{\scalebox{0.8}{0.075}} &\scalebox{0.8}{0.203}
    
    &\scalebox{0.8}{0.093} &\scalebox{0.8}{0.201} 
    &\scalebox{0.8}{0.121} &\scalebox{0.8}{0.240} 
    &\scalebox{0.8}{0.142} &\scalebox{0.8}{0.259} 

    &\scalebox{0.8}{0.091} &\scalebox{0.8}{0.232} 
    &\scalebox{0.8}{0.246} &\scalebox{0.8}{0.334}
    &\scalebox{0.8}{0.201} &\scalebox{0.8}{0.317} 

    &\scalebox{0.8}{0.095} &\scalebox{0.8}{0.217} 
    &\scalebox{0.8}{0.118} &\scalebox{0.8}{0.223} 
     \\
    
    & \scalebox{0.8}{48} 
    % \Our
    &\boldres{\scalebox{0.8}{0.093}} &\boldres{\scalebox{0.8}{0.204}}
    % \SSM
    &\scalebox{0.8}{0.199} &\scalebox{0.8}{0.281}
    &\scalebox{0.8}{0.133} &\scalebox{0.8}{0.243} 
    &\scalebox{0.8}{0.132} &\scalebox{0.8}{0.239} 
    &\scalebox{0.8}{0.147} &\scalebox{0.8}{0.255} 
   
    &\secondres{\scalebox{0.8}{0.125}} &\secondres{\scalebox{0.8}{0.236}}
    &\scalebox{0.8}{0.202} &\scalebox{0.8}{0.317} 
    &\scalebox{0.8}{0.211} &\scalebox{0.8}{0.319}
    
    &\scalebox{0.8}{0.138} &\scalebox{0.8}{0.244} 
    &\scalebox{0.8}{0.551} &\scalebox{0.8}{0.529}
    &\scalebox{0.8}{0.333} &\scalebox{0.8}{0.425}
    
    &\scalebox{0.8}{0.136} &\scalebox{0.8}{0.249}
    &\scalebox{0.8}{0.155} &\scalebox{0.8}{0.260}
    
    \\
    \cmidrule(lr){2-28}

    & \scalebox{0.8}{Avg} 
    % \Our
    &\boldres{\scalebox{0.8}{0.069}} &\boldres{\scalebox{0.8}{0.174}}
    % \SSM
    &\scalebox{0.8}{0.131} &\scalebox{0.8}{0.236}
    &\scalebox{0.8}{0.095} &\secondres{\scalebox{0.8}{0.203}}
    &\secondres{\scalebox{0.8}{0.093}} &\scalebox{0.8}{0.204}
    &\scalebox{0.8}{0.096} &\scalebox{0.8}{0.213}
    
    &\scalebox{0.8}{0.096} &\scalebox{0.8}{0.204}
    &\scalebox{0.8}{0.138} &\scalebox{0.8}{0.253}
    &\scalebox{0.8}{0.151} &\scalebox{0.8}{0.265}
    
    &\scalebox{0.8}{0.101} &\scalebox{0.8}{0.219} 
    &\scalebox{0.8}{0.308} &\scalebox{0.8}{0.366}
    &\scalebox{0.8}{0.219} &\scalebox{0.8}{0.328}
    
    &\scalebox{0.8}{0.101} &\scalebox{0.8}{0.217}
    &\scalebox{0.8}{0.119} &\scalebox{0.8}{0.225}
    
    \\
    \midrule
    
    \multirow{4}{*}{\rotatebox{90}{\scalebox{0.95}{PEMS04}}}
    &\scalebox{0.8}{12} 
    % \Our
    &\boldres{\scalebox{0.8}{0.061}} &\boldres{\scalebox{0.8}{0.168}}
    % \SSM
    &\scalebox{0.8}{0.092} &\scalebox{0.8}{0.201}
    &\scalebox{0.8}{0.076} &\scalebox{0.8}{0.180} 
    &\scalebox{0.8}{0.079} &\scalebox{0.8}{0.199} 
    &\secondres{\scalebox{0.8}{0.073}} &\secondres{\scalebox{0.8}{0.172}} 
    
    &\scalebox{0.8}{0.078} &\scalebox{0.8}{0.183} 
    &\scalebox{0.8}{0.098} &\scalebox{0.8}{0.218} 
    &\scalebox{0.8}{0.105} &\scalebox{0.8}{0.224}
    
    &\scalebox{0.8}{0.082}& \scalebox{0.8}{0.201}
    &\scalebox{0.8}{0.138} &\scalebox{0.8}{0.252}
    &\scalebox{0.8}{0.148} &\scalebox{0.8}{0.272}
    
    &\scalebox{0.8}{0.082} &\scalebox{0.8}{0.192} 
    &\scalebox{0.8}{0.087} &\scalebox{0.8}{0.195} 
     \\
    
    & \scalebox{0.8}{24} 
    % \Our
    &\boldres{\scalebox{0.8}{0.062}} &\boldres{\scalebox{0.8}{0.172}}
    % \SSM
    &\scalebox{0.8}{0.119} &\scalebox{0.8}{0.256}
    &\secondres{\scalebox{0.8}{0.084}} &\scalebox{0.8}{0.193} 
    &\scalebox{0.8}{0.091} &\scalebox{0.8}{0.228} 
    &\scalebox{0.8}{0.086} &\secondres{\scalebox{0.8}{0.185}} 
    
    &\scalebox{0.8}{0.095} &\scalebox{0.8}{0.205}
    &\scalebox{0.8}{0.131} &\scalebox{0.8}{0.256} 
    &\scalebox{0.8}{0.153} &\scalebox{0.8}{0.275} 
    
    &\scalebox{0.8}{0.102} &\scalebox{0.8}{0.222}
    &\scalebox{0.8}{0.258} &\scalebox{0.8}{0.348}
    &\scalebox{0.8}{0.224} &\scalebox{0.8}{0.340} 
    
    &\scalebox{0.8}{0.093} &\scalebox{0.8}{0.209} 
    &\scalebox{0.8}{0.103} &\scalebox{0.8}{0.215} 
    
     \\
    
    & \scalebox{0.8}{48} 
    % \Our
    &\boldres{\scalebox{0.8}{0.087}} &\boldres{\scalebox{0.8}{0.193}}
    % \SSM
    &\scalebox{0.8}{0.187} &\scalebox{0.8}{0.283}
    &\scalebox{0.8}{0.115} &\scalebox{0.8}{0.224} 
    &\scalebox{0.8}{0.156} &\scalebox{0.8}{0.266} 
    &\secondres{\scalebox{0.8}{0.092}} &\secondres{\scalebox{0.8}{0.209}} 
    
    &\scalebox{0.8}{0.120} &\scalebox{0.8}{0.233} 
    &\scalebox{0.8}{0.205} &\scalebox{0.8}{0.326} 
    &\scalebox{0.8}{0.229} &\scalebox{0.8}{0.339} 
    
    &\scalebox{0.8}{0.157} &\scalebox{0.8}{0.289} 
    &\scalebox{0.8}{0.572} &\scalebox{0.8}{0.544} 
    &\scalebox{0.8}{0.355} &\scalebox{0.8}{0.437} 
    
    &\scalebox{0.8}{0.123} &\scalebox{0.8}{0.227}
    &\scalebox{0.8}{0.136} &\scalebox{0.8}{0.250} 
     \\
    \cmidrule(lr){2-28}

    & \scalebox{0.8}{Avg} 
    % \Our
    &\boldres{\scalebox{0.8}{0.070}} &\boldres{\scalebox{0.8}{0.178}}
    % \SSM
    &\scalebox{0.8}{0.133} &\scalebox{0.8}{0.247}
    &\scalebox{0.8}{0.092} &\scalebox{0.8}{0.199}
    &\scalebox{0.8}{0.109} &\scalebox{0.8}{0.231}
    &\secondres{\scalebox{0.8}{0.083}} &\secondres{\scalebox{0.8}{0.189}}
    
    &\scalebox{0.8}{0.098} &\scalebox{0.8}{0.207}
    &\scalebox{0.8}{0.145} &\scalebox{0.8}{0.267}
    &\scalebox{0.8}{0.162} &\scalebox{0.8}{0.279}
    
    &\scalebox{0.8}{0.114} &\scalebox{0.8}{0.238} 
    &\scalebox{0.8}{0.323} &\scalebox{0.8}{0.381}
    &\scalebox{0.8}{0.242} &\scalebox{0.8}{0.350}
    
    &\scalebox{0.8}{0.099} &\scalebox{0.8}{0.209}
    &\scalebox{0.8}{0.109} &\scalebox{0.8}{0.220}
    \\
    \midrule
    
    \multirow{4}{*}{\rotatebox{90}{\scalebox{0.95}{PEMS07}}}
    &\scalebox{0.8}{12} 
    % \Our
    &\boldres{\scalebox{0.8}{0.059}} &\boldres{\scalebox{0.8}{0.147}}
    % \SSM
    &\scalebox{0.8}{0.083} &\scalebox{0.8}{0.192}
    &\secondres{\scalebox{0.8}{0.063}} &\secondres{\scalebox{0.8}{0.159}} 
    &\scalebox{0.8}{0.069} &\scalebox{0.8}{0.172} 
    &\scalebox{0.8}{0.067} &\scalebox{0.8}{0.177} 
    
    &\scalebox{0.8}{0.067} &\scalebox{0.8}{0.165} 
    &\scalebox{0.8}{0.094} &\scalebox{0.8}{0.200} 
    &\scalebox{0.8}{0.095} &\scalebox{0.8}{0.207} 
    
    &\scalebox{0.8}{0.078} &\scalebox{0.8}{0.181} 
    &\scalebox{0.8}{0.118} &\scalebox{0.8}{0.235}
    &\scalebox{0.8}{0.115} &\scalebox{0.8}{0.242} 
    
    &\scalebox{0.8}{0.073} &\scalebox{0.8}{0.175} 
    &\scalebox{0.8}{0.082} &\scalebox{0.8}{0.181} 
     \\
    
    & \scalebox{0.8}{24} 
    % \Our
    &\boldres{\scalebox{0.8}{0.069}} &\boldres{\scalebox{0.8}{0.162}}
    % \SSM
    &\scalebox{0.8}{0.128} &\scalebox{0.8}{0.227}
    &\secondres{\scalebox{0.8}{0.081}} &\secondres{\scalebox{0.8}{0.183}} 
    &\scalebox{0.8}{0.092} &\scalebox{0.8}{0.194} 
    &\scalebox{0.8}{0.094} &\scalebox{0.8}{0.211} 
    
    &\scalebox{0.8}{0.088} &\scalebox{0.8}{0.190} 
    &\scalebox{0.8}{0.139} &\scalebox{0.8}{0.247} 
    &\scalebox{0.8}{0.150} &\scalebox{0.8}{0.262} 
    
    &\scalebox{0.8}{0.108} &\scalebox{0.8}{0.239} 
    &\scalebox{0.8}{0.242} &\scalebox{0.8}{0.341}
    &\scalebox{0.8}{0.210} &\scalebox{0.8}{0.329} 
    
    &\scalebox{0.8}{0.095} &\scalebox{0.8}{0.193} 
    &\scalebox{0.8}{0.101} &\scalebox{0.8}{0.204} 
     \\
    
    & \scalebox{0.8}{48}
    % \Our
    &\boldres{\scalebox{0.8}{0.079}} &\boldres{\scalebox{0.8}{0.174}}
    % \SSM
    &\scalebox{0.8}{0.205} &\scalebox{0.8}{0.299}
    &\secondres{\scalebox{0.8}{0.093}} &\secondres{\scalebox{0.8}{0.192}} 
    &\scalebox{0.8}{0.137} &\scalebox{0.8}{0.255} 
    &\scalebox{0.8}{0.122} &\scalebox{0.8}{0.236}
    
    &\scalebox{0.8}{0.110} &\scalebox{0.8}{0.215} 
    &\scalebox{0.8}{0.311} &\scalebox{0.8}{0.369} 
    &\scalebox{0.8}{0.253} &\scalebox{0.8}{0.340} 
    
    &\scalebox{0.8}{0.167} &\scalebox{0.8}{0.292} 
    &\scalebox{0.8}{0.562} &\scalebox{0.8}{0.541} 
    &\scalebox{0.8}{0.398} &\scalebox{0.8}{0.458}
    
    &\scalebox{0.8}{0.116} &\scalebox{0.8}{0.221} 
    &\scalebox{0.8}{0.134} &\scalebox{0.8}{0.238} 
     \\
    \cmidrule(lr){2-28}

    & \scalebox{0.8}{Avg} 
    % \Our
    &\boldres{\scalebox{0.8}{0.069}} &\boldres{\scalebox{0.8}{0.161}}
    % \SSM
    &\scalebox{0.8}{0.139} &\scalebox{0.8}{0.240}
    &\secondres{\scalebox{0.8}{0.079}} &\secondres{\scalebox{0.8}{0.178}}
    &\scalebox{0.8}{0.099} &\scalebox{0.8}{0.207}
    &\scalebox{0.8}{0.094} &\scalebox{0.8}{0.208}
    
    &\scalebox{0.8}{0.088} &\scalebox{0.8}{0.190}
    &\scalebox{0.8}{0.181} &\scalebox{0.8}{0.272}
    &\scalebox{0.8}{0.166} &\scalebox{0.8}{0.270}

    &\scalebox{0.8}{0.118} &\scalebox{0.8}{0.237} 
    &\scalebox{0.8}{0.307} &\scalebox{0.8}{0.372}
    &\scalebox{0.8}{0.241} &\scalebox{0.8}{0.343}
    
    &\scalebox{0.8}{0.095} &\scalebox{0.8}{0.196}
    &\scalebox{0.8}{0.106} &\scalebox{0.8}{0.208}    
    \\
    \midrule
    
    \multirow{4}{*}{\rotatebox{90}{\scalebox{0.95}{PEMS08}}}
    & \scalebox{0.8}{12} 
    % \Our
    &\boldres{\scalebox{0.8}{0.052}} &\boldres{\scalebox{0.8}{0.158}}
    % \SSM
    &\scalebox{0.8}{0.141} &\scalebox{0.8}{0.192}
    &\secondres{\scalebox{0.8}{0.076}} &\secondres{\scalebox{0.8}{0.178}} 
    &\scalebox{0.8}{0.082} &\scalebox{0.8}{0.199}
    &\scalebox{0.8}{0.080} &\scalebox{0.8}{0.191} 
    
    &\scalebox{0.8}{0.079} &\scalebox{0.8}{0.182}
    &\scalebox{0.8}{0.165} &\scalebox{0.8}{0.214} 
    &\scalebox{0.8}{0.168} &\scalebox{0.8}{0.232} 
    
    &\scalebox{0.8}{0.144} &\scalebox{0.8}{0.196} 
    &\scalebox{0.8}{0.133} &\scalebox{0.8}{0.247} 
    &\scalebox{0.8}{0.154} &\scalebox{0.8}{0.276} 
    
    &\scalebox{0.8}{0.082} &\scalebox{0.8}{0.181}
    &\scalebox{0.8}{0.112} &\scalebox{0.8}{0.212} 
     \\
    
    & \scalebox{0.8}{24} 
    % \Our
    &\boldres{\scalebox{0.8}{0.091}} &\boldres{\scalebox{0.8}{0.183}} 
    % \SSM
    &\scalebox{0.8}{0.183} &\scalebox{0.8}{0.299}
    &\secondres{\scalebox{0.8}{0.104}} &\secondres{\scalebox{0.8}{0.209}} 
    &\scalebox{0.8}{0.134} &\scalebox{0.8}{0.238} 
    &\scalebox{0.8}{0.116} &\scalebox{0.8}{0.221} 
    
    &\scalebox{0.8}{0.115} &\scalebox{0.8}{0.219} 
    &\scalebox{0.8}{0.215} &\scalebox{0.8}{0.260} 
    &\scalebox{0.8}{0.224} &\scalebox{0.8}{0.281} 
    
    &\scalebox{0.8}{0.181} &\scalebox{0.8}{0.272}
    &\scalebox{0.8}{0.249} &\scalebox{0.8}{0.343} 
    &\scalebox{0.8}{0.248} &\scalebox{0.8}{0.353}
    
    &\scalebox{0.8}{0.129} &\scalebox{0.8}{0.227} 
    &\scalebox{0.8}{0.141} &\scalebox{0.8}{0.238} 
    \\
    
    & \scalebox{0.8}{48} 
    % \Our
    &\boldres{\scalebox{0.8}{0.149}} &\boldres{\scalebox{0.8}{0.206}} 
    % \SSM
    &\scalebox{0.8}{0.237} &\scalebox{0.8}{0.307}
    &\secondres{\scalebox{0.8}{0.167}} &\secondres{\scalebox{0.8}{0.228}} 
    &\scalebox{0.8}{0.200} &\scalebox{0.8}{0.255} 
    &\scalebox{0.8}{0.199} &\scalebox{0.8}{0.241} 
    
    &\scalebox{0.8}{0.186} &\scalebox{0.8}{0.235} 
    &\scalebox{0.8}{0.315} &\scalebox{0.8}{0.355} 
    &\scalebox{0.8}{0.321} &\scalebox{0.8}{0.354} 
    
    &\scalebox{0.8}{0.259} &\scalebox{0.8}{0.323} 
    &\scalebox{0.8}{0.569} &\scalebox{0.8}{0.544} 
    &\scalebox{0.8}{0.440} &\scalebox{0.8}{0.470} 
    
    &\scalebox{0.8}{0.183} &\scalebox{0.8}{0.231} 
    &\scalebox{0.8}{0.198} &\scalebox{0.8}{0.283} 
    \\
    \cmidrule(lr){2-28}

    & \scalebox{0.8}{Avg} 
    % \Our
    &\boldres{\scalebox{0.8}{0.097}} &\boldres{\scalebox{0.8}{0.182}}
    % \SSM
    &\scalebox{0.8}{0.187} &\scalebox{0.8}{0.266}
    &\secondres{\scalebox{0.8}{0.116}} &\secondres{\scalebox{0.8}{0.205}}
    &\scalebox{0.8}{0.139} &\scalebox{0.8}{0.231}
    &\scalebox{0.8}{0.132} &\scalebox{0.8}{0.218}

    &\scalebox{0.8}{0.127} &\scalebox{0.8}{0.212}
    &\scalebox{0.8}{0.232} &\scalebox{0.8}{0.276}
    &\scalebox{0.8}{0.238} &\scalebox{0.8}{0.289}
    
    &\scalebox{0.8}{0.195} &\scalebox{0.8}{0.264} 
    &\scalebox{0.8}{0.317} &\scalebox{0.8}{0.378}
    &\scalebox{0.8}{0.281} &\scalebox{0.8}{0.366}

    &\scalebox{0.8}{0.131} &\scalebox{0.8}{0.213}
    &\scalebox{0.8}{0.150} &\scalebox{0.8}{0.244}
    \\
    \midrule
     \multicolumn{2}{c}{\scalebox{0.8}{{$1^{\text{st}}$ Count}}} 
     % \Our
     &\boldres{\scalebox{0.8}{16}} &\boldres{\scalebox{0.8}{16}} 
     % \SSM
     &\scalebox{0.8}{0} &\scalebox{0.8}{0}
     &\scalebox{0.8}{0} &\scalebox{0.8}{0}
     &\scalebox{0.8}{0} &\scalebox{0.8}{0}
     &\scalebox{0.8}{0} &\scalebox{0.8}{0}
     
     &\scalebox{0.8}{0} &\scalebox{0.8}{0}
     &\scalebox{0.8}{0} &\scalebox{0.8}{0}
     &\scalebox{0.8}{0} &\scalebox{0.8}{0}
     
     &\scalebox{0.8}{0} &\scalebox{0.8}{0}
     &\scalebox{0.8}{0} &\scalebox{0.8}{0} 
     &\scalebox{0.8}{0} &\scalebox{0.8}{0} 
     
     &\scalebox{0.8}{0} &\scalebox{0.8}{0}
     &\scalebox{0.8}{0} &\scalebox{0.8}{0}
     \\
        \bottomrule
      \end{tabular}
    \end{small}
  \end{threeparttable}
  }
\end{table*}

\subsubsection{\textbf{Short-term Forecasting Results}}
We evaluate short-term forecasting on four PEMS traffic benchmarks, where accurate prediction is challenging due to intricate spatiotemporal dependencies among sensors and strong inter-series interactions that drive citywide traffic dynamics.
Table~\ref{tab:short_term_results} summarizes the results under a fixed lookback length $T=96$ and prediction horizons $S\in\{12,24,48\}$.
Overall, \ourname{} achieves the best performance in all 16 settings (both MSE and MAE), demonstrating its strong effectiveness for short-term forecasting.
In terms of averaged MSE, \ourname{} improves over the strongest competitor by a clear margin, reducing errors by 25.8\% on PEMS03, 15.7\% on PEMS04, 12.7\% on PEMS07, and 16.4\% on PEMS08.
Notably, variate-independent architectures exhibit pronounced performance degradation on these datasets.
For example, PatchTST, which largely models each variate independently, is consistently inferior to \ourname{} across all horizons; a similar trend is observed for MLP-based methods, whose channel-independent mixing fails to capture dynamic cross-sensor interactions.
We attribute the consistent gains of \ourname{} to its explicit and effective cross-variate modeling: Mamba-DALA strengthens cross-variate interactions, including delay-aware dependencies among traffic sensors, and complements the cross-time temporal modeling of Mamba-SSD.
Their synergy enables \ourname{} to jointly exploit temporal dynamics and delay-aware inter-sensor dependencies, resulting in substantially improved short-term forecasting performance on the PEMS benchmarks.

\begin{table*}[!h]
  \caption{Imputation results on 96-length input windows with random masking ratios $\{12.5\%,25\%,37.5\%,50\%\}$. Lower MSE or MAE is better.}
  \label{tab:imputation_results}
  \renewcommand{\arraystretch}{0.85} 
  \centering
  \resizebox{\textwidth}{!}{
  \begin{threeparttable}
  \begin{small}
  \renewcommand{\multirowsetup}{\centering}
  \setlength{\tabcolsep}{1.45pt}
  % [inline block 1: 1 envs, 35487 chars -> data_tex | \begin{tabular}{cccccccccccccccccccccccccccc}     \toprule...]

  \end{small}
  \end{threeparttable}
}
\end{table*}

\subsubsection{\textbf{Imputation Results}}
Time series imputation aims to recover missing values from partially observed series, which is crucial in real-world scenarios where data acquisition is imperfect due to sensor malfunctions or unexpected transmission glitches.
As reported in Table~\ref{tab:imputation_results}, \ourname{} consistently achieves the best performance across all datasets and masking ratios, surpassing both Transformer-based and SSM-based competitors by clear margins.
A notable trend is that the advantage of \ourname{} becomes more pronounced as the missing rate increases, where accurate recovery increasingly relies on fine-grained modeling of local fluctuations and precise temporal alignment between observed contexts and missing entries.
For instance, on ETTh1 with $50\%$ masking, \ourname{} attains a low error of $0.064/0.159$, whereas Transformer baselines (e.g., PatchTST: $0.173/0.271$; iTransformer: $0.142/0.286$) and SSM-based methods (e.g., S-Mamba: $0.142/0.281$) degrade substantially, indicating limited capability in capturing the local variations required for reliable completion.
These results suggest that imputation is not merely a global dependency modeling problem, but a fine-grained completion task that hinges on (i) local context continuity around missing positions and (ii) accurate relative lag relations between correlated variates.
Concretely, Mamba-SSD provides efficient temporal context aggregation to supply informative local cues. At the same time, Mamba-DALA enables delay-aware cross-variate interactions, enabling observed variates to contribute aligned evidence for reconstructing missing values.
Together, their synergy enables more precise recovery of local dynamics and inter-series signals, resulting in consistently superior imputation accuracy.

\subsection{\textbf{Classification Results}}
Time series classification aims to assign a semantic label to a multivariate series by recognizing both global patterns and discriminative local cues.
We evaluate on 10 multivariate datasets from the UEA Time Series Archive~\cite{bagnall2018uea} and report the average classification accuracy of each method across the 10 benchmarks.
As shown in Fig.\ref{fig:classification-anomaly}(a), \ourname{} achieves the highest average accuracy of $76.3\%$, consistently outperforming strong baselines from all major families, including ModernTCN ($74.2\%$), TimesNet ($73.6\%$), Crossformer ($73.2\%$), and TimeMixer ($73.1\%$).
A key observation is that TCN-based methods achieve competitive classification performance.
This is because temporal convolutions provide a strong inductive bias for extracting discriminative local motifs, e.g., shape-like patterns, that are often sufficient to determine class labels.
With hierarchical convolutions, TCNs can further aggregate multi-scale evidence while remaining robust to small phase shifts and noise through parameter sharing and localized receptive fields, making them particularly effective for semantic recognition.
In contrast, MLP-based approaches remain inferior (e.g., DLinear: $67.5\%$, RLinear: $70.0\%$), since their predominantly linear mixing is less capable of building high-level semantic features and modeling complex inter-series interactions.
The superior performance of \ourname{} stems from its fine-grained and hierarchical dependency encoding: stacking DuoMNet blocks progressively aggregates multi-scale temporal evidence, while the dual-path design explicitly captures both temporal variations and cross-variate cues.
In particular, Mamba-SSD strengthens long-range temporal structure modeling, and Mamba-DALA enhances cross-variate interactions, enabling \ourname{} to form more informative and semantically aligned representations for classification beyond purely convolutional local feature extraction.

\begin{figure*}[h]
    \centering
    \begin{subfigure}[b]{0.49\textwidth}
        \centering
        \includegraphics[width=\linewidth]{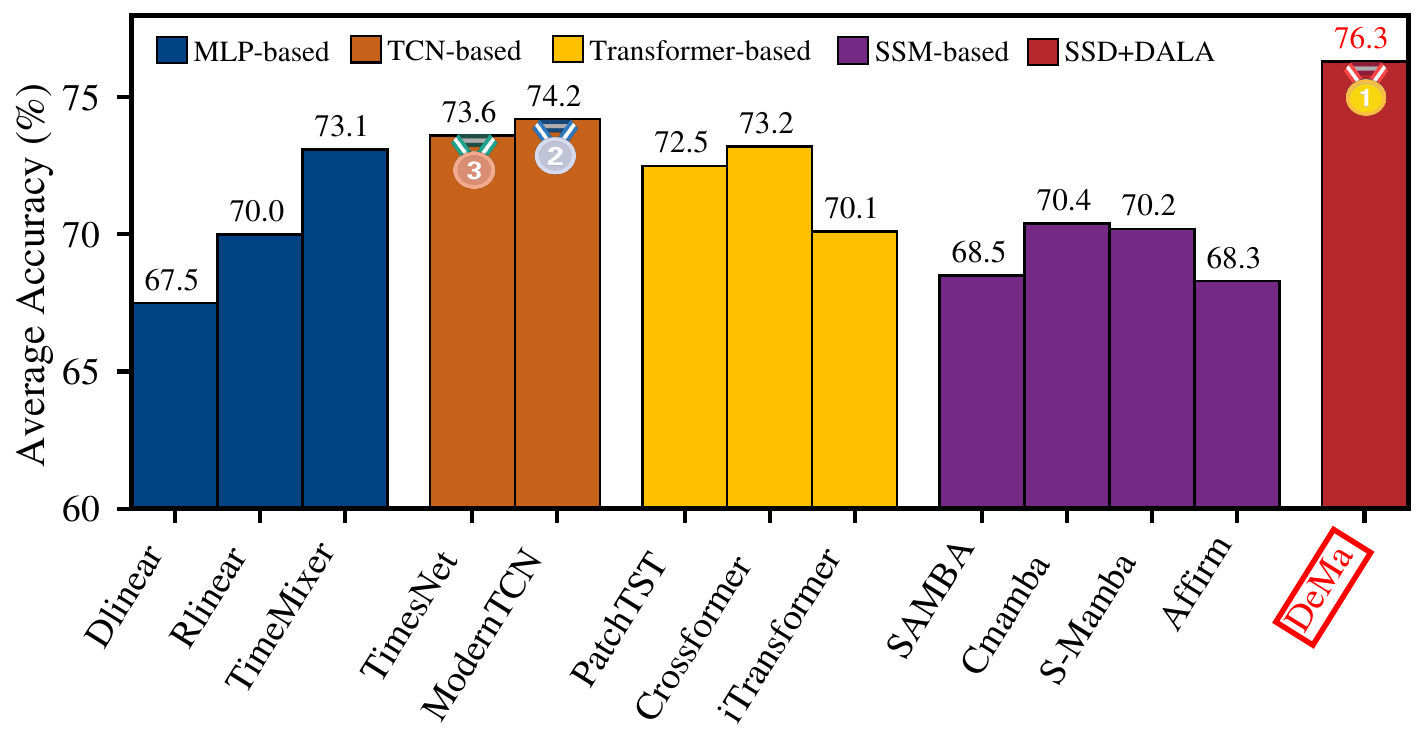}
        \caption{Classification results (average accuracy).}
        \label{fig:classification_result}
    \end{subfigure}
    \hfill
    \begin{subfigure}[b]{0.49\textwidth}
        \centering
        \includegraphics[width=\linewidth]{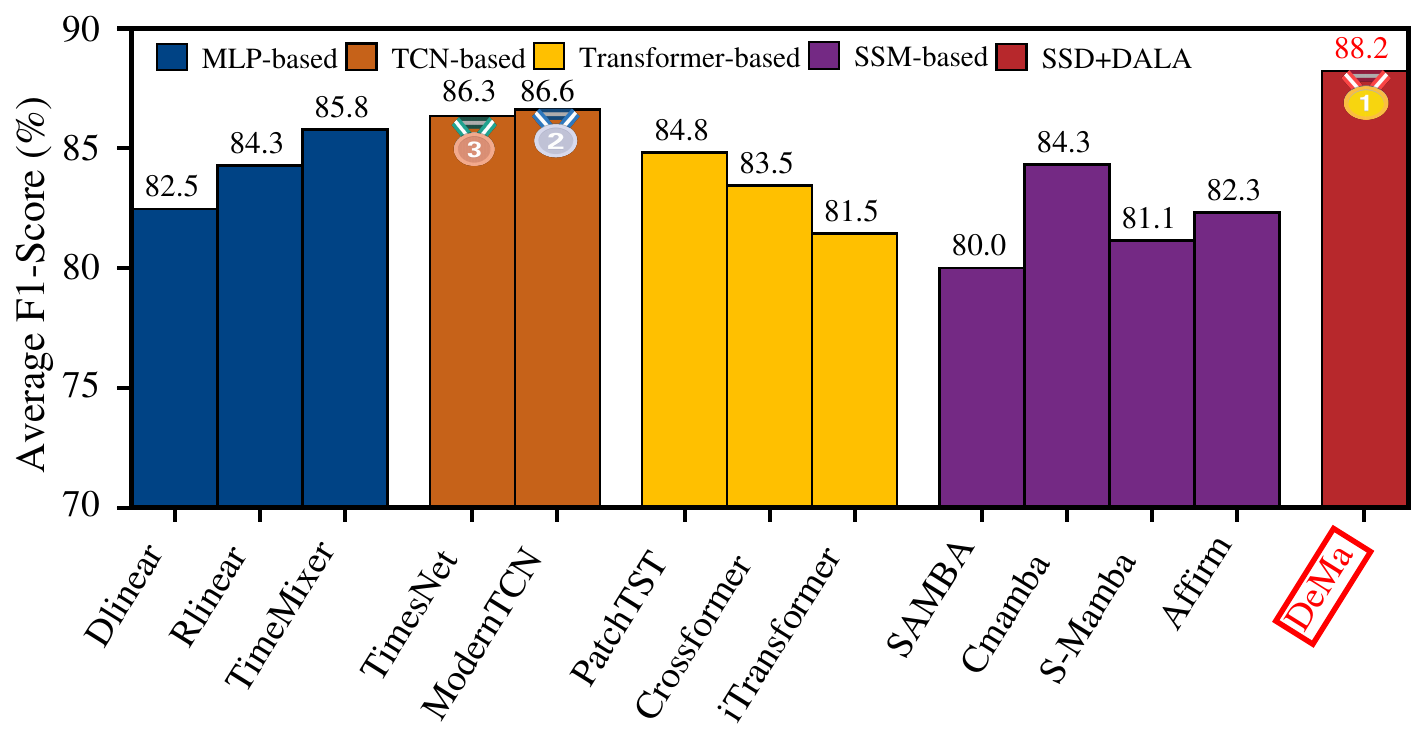}
        \caption{Anomaly detection results (average F1-score).}
        \label{fig:anomaly_result}
    \end{subfigure}
    \caption{Performance comparison on classification and anomaly detection tasks. Results are averaged over multiple datasets, and higher values indicate better performance.}
    \label{fig:classification-anomaly}
\end{figure*}

\subsubsection{\textbf{Anomaly Detection Results}}
Anomaly detection aims to identify rare and abnormal patterns in time series, which often correspond to faults, critical events, or outliers requiring timely intervention.
Following prior work~\cite{wang2024deep}, we evaluate on five widely used anomaly detection benchmarks and report the average F1-score across datasets~\cite{tu2026generalized}.
For fair comparison, we adopt reconstruction error~\cite{Timesnet} as the anomaly criterion for all methods.
As shown in Fig.\ref{fig:classification-anomaly}(b), \ourname{} achieves the best average F1-score of $88.2\%$, outperforming strong competitors such as ModernTCN ($86.6\%$) and TimesNet ($86.3\%$).
This gain highlights the advantage of our fine-grained modeling.
The proposed SSD$+$DALA design jointly strengthens (i) local temporal continuity and long-range temporal context via Mamba-SSD, and (ii) delay-aware cross-variate interactions via Mamba-DALA, enabling more faithful reconstruction of normal dynamics.
As a result, abnormal deviations yield sharper and more separable reconstruction residuals, leading to improved detection accuracy.
In contrast, iTransformer~\cite{liuitransformer} and S-Mamba~\cite{wang2024mamba} perform notably worse, likely because prevalent normal patterns can dominate their series-wise global similarity modeling by attention; consequently, the subtle abnormal segments are more easily diluted when aggregating pairwise correlations, leading to inferior detection.
Finally, we observe that methods benefiting from explicit decomposition cues (e.g., TimesNet and \ourname{}) tend to achieve stronger overall detection performance, underscoring the importance of separating stable periodic structures from irregular fluctuations so that violations of normal regularities become more detectable.
\begin{figure*}[h]
    \centering
    \begin{subfigure}[b]{0.53\textwidth}
        \centering
        \includegraphics[width=\linewidth]{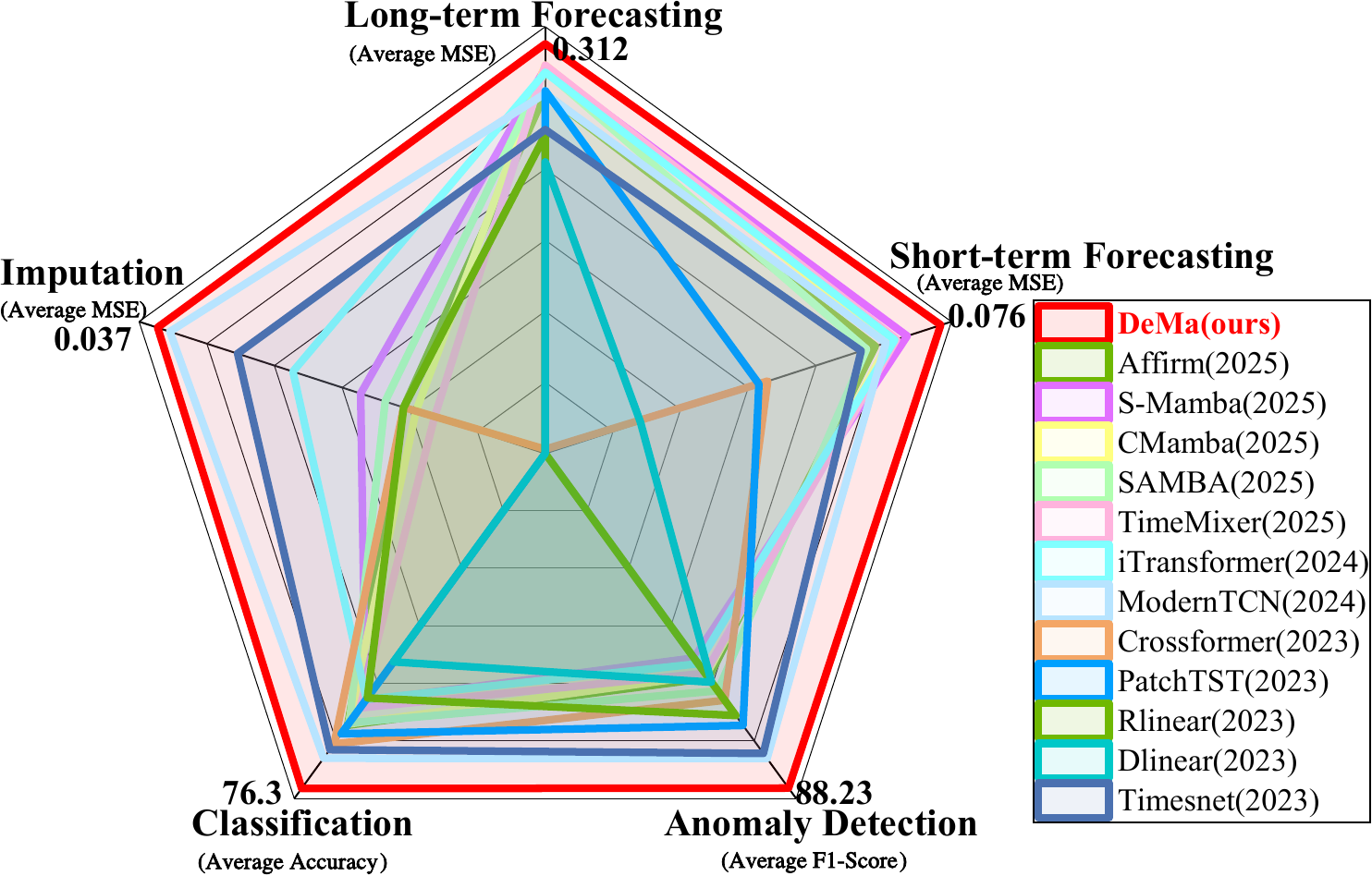}
        \caption{Overall performance across five tasks.}
        \label{fig:five_tasks}
    \end{subfigure}
    \hfill
    \begin{subfigure}[b]{0.43\textwidth}
        \centering
        \includegraphics[width=\linewidth]{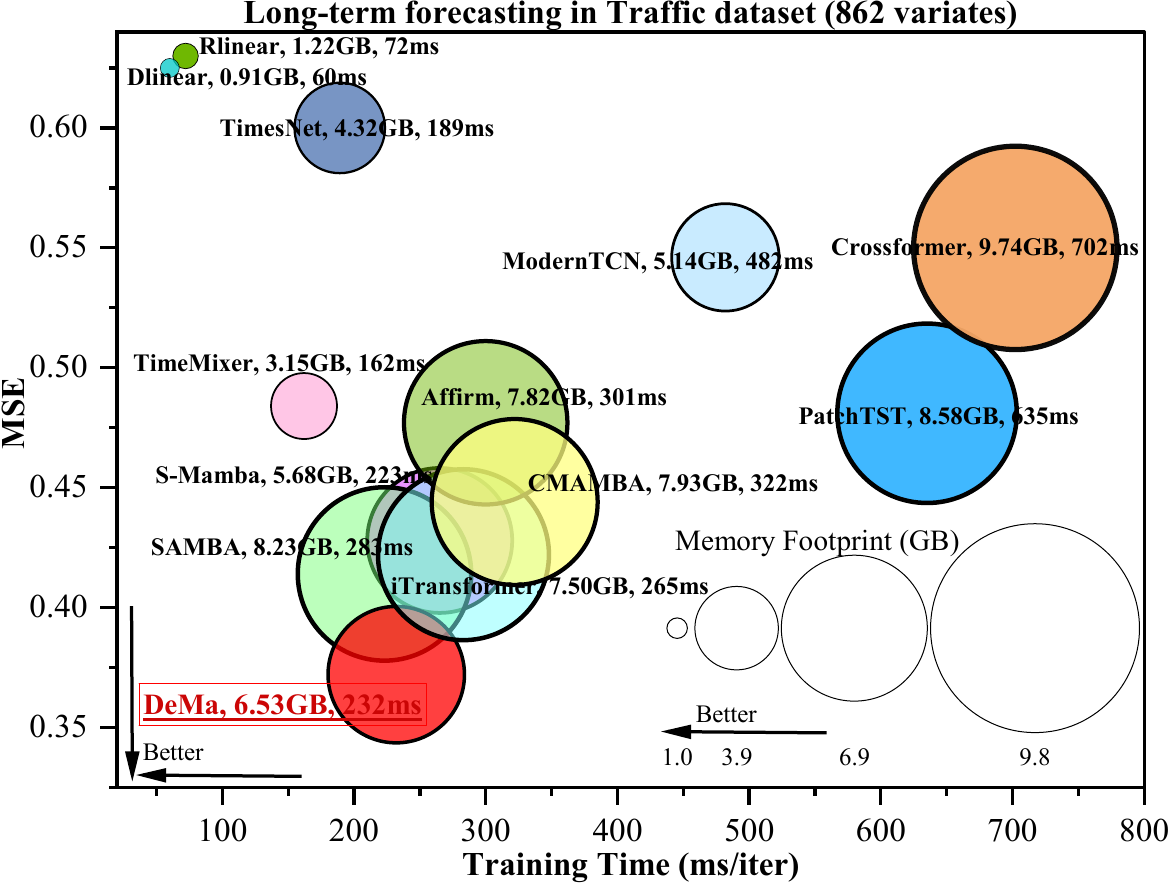}
        \caption{Performance--efficiency trade-off.}
        \label{fig:efficiency_tradeoff}
    \end{subfigure}
    \caption{Comparison of model performance and efficiency. \ourname{} achieves consistent state-of-the-art results across five mainstream time-series tasks. It also attains the lowest mean squared error while requiring less training time and GPU memory.}
    \label{fig:overall}
\end{figure*}
\subsection{\textbf{Performance--Efficiency Trade-off (RQ2)}}
Fig.\ref{fig:overall} summarizes the overall effectiveness and efficiency of \ourname{}.
Fig.\ref{fig:overall}(a) provides an overall comparison of baselines across five representative time-series tasks.
Overall, \ourname{} achieves consistently strong results on all tasks, demonstrating robust task generality rather than tuning to a single forecasting setting.
Fig.\ref{fig:overall}(b) further evaluates the accuracy-efficiency trade-off on a large-scale traffic dataset with $862$ variates under long-term forecasting, where the y-axis denotes MSE, the x-axis indicates training time, and the bubble size reflects GPU memory footprint.
As shown, \ourname{} attains the lowest MSE ($0.372$) with moderate training time ($232$ ms/iter) and manageable memory usage ($6.53$~GB), offering a favorable balance between accuracy and computational cost.
These results suggest that \ourname{} is a practical and scalable solution for large-scale multivariate time-series analysis.

\ourname{} exhibits clear efficiency advantages over Transformer-based baselines while remaining more accurate, benefiting from the linear-time state-space backbone.
For instance, Crossformer and PatchTST require substantially higher training time and memory footprint (e.g., $702$ ms/iter and $9.74$~GB for Crossformer), yet still yield higher MSE than \ourname{} in Fig.\ref{fig:overall}(b).
Moreover, \ourname{} consistently outperforms TCN-based models in the multi-task comparison (Fig.\ref{fig:overall}(a)).
While TCNs are efficient and effective at capturing local motifs, their limited receptive field and forecasting-oriented inductive bias may limit transferability to tasks that require long-range aggregation.
In contrast, \ourname{} learns more transferable representations by jointly encoding long-range temporal dynamics and delay-aware inter-series interactions.
Finally, \ourname{} improves the trade-off between accuracy and efficiency compared to recent Mamba variants.
As shown in Fig.\ref{fig:overall}(b), \ourname{} is faster and lighter than Affirm and CMamba (e.g., $232$ ms/iter and $6.53$~GB for \ourname{} versus $301$ ms/iter and $7.82$~GB for Affirm, and $322$ ms/iter and $7.93$~GB for CMamba), while achieving lower MSE.

In summary, \ourname{} combines state-of-the-art accuracy with practical training speed and memory efficiency, making it a strong candidate for scalable, general time-series analysis.

\subsection{\textbf{Scalability vs. Input Length (RQ3)}}
Semantic information in time series is often formed by aggregating evidence over long horizons rather than isolated timestamps. Although longer lookback windows provide richer context, they also place stricter demands on training efficiency and GPU memory. To evaluate scalability with respect to the input length, Fig.~\ref{fig:GPU} reports the per-iteration running time and GPU memory footprint of \ourname{} and representative Transformer-based baselines on ETTh1. We increase the lookback length from $\{384,768,1536,3072\}$ while fixing the embedding dimension to $d=256$ for a fair comparison. 

As shown, \ourname{} consistently achieves the lowest running time and memory usage across all lengths, and its growth remains close to linear as the series length increases. By contrast, Transformer-based methods (e.g., Transformer, PatchTST, and Crossformer) exhibit rapidly increasing computation and memory overhead with longer inputs, which substantially limits their practicality in long-term settings. Notably, iTransformer is more efficient than token-wise Transformers because it computes series-wise attention, avoiding quadratic growth with respect to the token length; however, it remains consistently slower and more memory-intensive than \ourname{} at large input lengths. These results further explain why the advantages of DeMa become especially pronounced when the input context grows: the dual path preserves near-linear scaling with series length, whereas attention-based baselines incur substantially steeper computational and memory growth.

\begin{figure*}[h]
\centering
\includegraphics[width=0.6\textwidth]{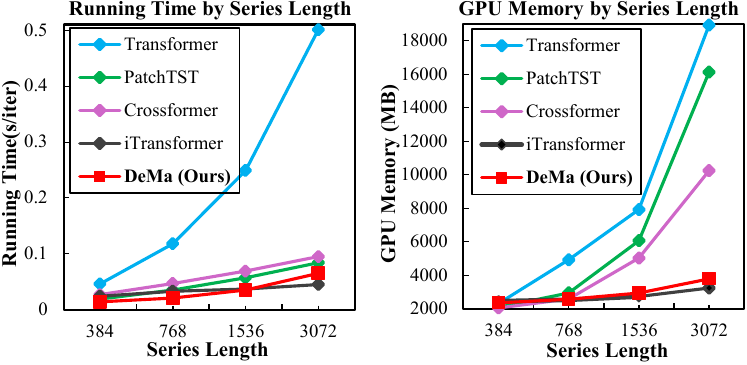}
\caption{Efficiency analysis of GPU memory and running time in a long-term lookback-window scenario. Our proposed \ourname{} scales linearly with the series length, whereas the vanilla Transformer exhibits quadratic complexity with respect to the series length.}
\label{fig:GPU}
\end{figure*}

These observations align with our complexity analysis in Fig.\ref{fig:motivation}(b). Leveraging the parallelizable Mamba-SSD and Mamba-DALA blocks, \ourname{} scales linearly with the token length, with time complexity $\mathcal{O}(2NLD^2)$, where $N$ is the number of variates, $L$ is the number of tokens after tokenization, and $D$ is the embedding dimension. In typical long-term MTS scenarios, $L, N \gg D$, making \ourname{} a more scalable backbone for long-term and large-scale multivariate time-series modeling.

\subsection{\textbf{Ablation Study (RQ4)}}
To quantify the contribution of each key component in \ourname{}, including the Adaptive Fourier Filter, \SSD{}, and \DALA{}, we conduct ablation studies on Traffic (long-term forecasting) and PEMS04 (short-term forecasting). 
As reported in Table~\ref{tab:ablation}, we consider two types of ablations: \emph{replacement} (\emph{Repl.}), which substitutes a target module with an alternative design, and \emph{removal} (\emph{w/o}), which disables the module entirely. 
Overall, each component consistently improves forecasting accuracy, and the full model yields the best performance on both datasets.
\begin{table*}[!h]
\caption{Ablation study of \ourname{}.}
\label{tab:ablation}

\begin{small}
\renewcommand{\multirowsetup}{\centering}
\renewcommand{\arraystretch}{1}

\begin{tabular*}{\textwidth}{@{\extracolsep{\fill}}clccccccc@{}}
\toprule
\multirow{2}{*}{No.} & \multirow{2}{*}{Design} & \multirow{2}{*}{Decomp.} & \multirow{2}{*}{Variate Module} & \multirow{2}{*}{Temporal Module} & \multicolumn{2}{c}{Traffic} & \multicolumn{2}{c}{PEMS04} \\
\cmidrule(lr){6-7} \cmidrule(lr){8-9}
& & & & & MSE & MAE & MSE & MAE \\
\midrule
1 & \textbf{\ourname{}} & \textbf{AFF} & \textbf{\DALA{}} & \textbf{\SSD{}} & \textbf{0.382} & \textbf{0.247} & \textbf{0.061} & \textbf{0.168} \\
\midrule
2 & \multirow{3}{*}{Repl.} & AFF & \SSD{} & \DALA{} & 0.416 & 0.297 & 0.093 & 0.204 \\
3 &                        & AFF & \DALA{} & \DALA{} & 0.507 & 0.314 & 0.126 & 0.257 \\
4 &                        & AFF & \SSD{}  & \SSD{}  & 0.412 & 0.295 & 0.087 & 0.192 \\
\midrule
5 & \multirow{3}{*}{w/o}   & w/o & \DALA{} & \SSD{}  & 0.415 & 0.279 & 0.097 & 0.215 \\
6 &                        & w/o & w/o     & \SSD{}  & 0.539 & 0.317 & 0.142 & 0.267 \\
7 &                        & w/o & \DALA{} & w/o     & 0.597 & 0.383 & 0.225 & 0.337 \\
\bottomrule
\end{tabular*}
\end{small}
\end{table*}

\begin{itemize}[leftmargin=*]
\item \textbf{Adaptive Fourier Filter.}
Comparing Rows~1 and~5, removing the decomposition module causes clear degradation on both datasets (Traffic MSE: $0.382\!\rightarrow\!0.415$; PEMS04 MSE: $0.061\!\rightarrow\!0.097$). 
This verifies that the Adaptive Fourier Filter provides a beneficial decomposition prior by separating slowly varying structures (e.g., trend and seasonality) from short-term fluctuations, thereby reducing spectral interference and facilitating more reliable dependency modeling in both the temporal and variate paths, consistent with decomposition-based forecasting literature~\cite{liu2023koopa, wu2021autoformer}.

\item \textbf{\SSD{} for temporal modeling.}
Rows~2--3 replace the temporal \SSD{} with attention-style alternatives, leading to substantial performance drops (e.g., Row~3: Traffic MSE $0.382\!\rightarrow\!0.507$, a $32.7\%$ increase). 
This indicates that \SSD{} is critical for efficiently capturing long-range temporal dynamics. 
Moreover, removing the temporal path entirely (Row~7) results in the largest degradation (Traffic MSE $0.382\!\rightarrow\!0.597$, $+56.3\%$; PEMS04 MSE $0.061\!\rightarrow\!0.225$), confirming that strong cross-time modeling is indispensable for accurate forecasting.

\item \textbf{\DALA{} for cross-variate interaction modeling.}
Rows~4 and~6 highlight the necessity of \DALA{} for modeling cross-variate dependencies. 
Replacing \DALA{} with a temporal-style \SSD{} interaction (Row~4) already hurts performance (Traffic MSE $0.382\!\rightarrow\!0.412$; PEMS04 MSE $0.061\!\rightarrow\!0.087$), suggesting that directly reusing temporal mechanisms is insufficient for variate interactions.
More importantly, removing \DALA{} (Row~6) causes a pronounced drop (Traffic MSE $0.382\!\rightarrow\!0.539$, $+41.1\%$; PEMS04 MSE $0.061\!\rightarrow\!0.142$), verifying that delay-aware cross-variate modeling provides critical complementary cues beyond temporal dynamics.
\end{itemize}

Overall, the ablation results show that the main components of DeMa are functionally complementary rather than redundant. This supports that the structured design of DeMa is challenge-driven and empirically justified, rather than an unnecessarily complicated composition.

\begin{figure*}[!h]
    \centering

    % Row 1
    \begin{minipage}{0.45\textwidth}
        \centering
        \subfloat[Weather Forecasting: MAE vs. $(\alpha,\beta)$.]{
            \includegraphics[width=\linewidth]{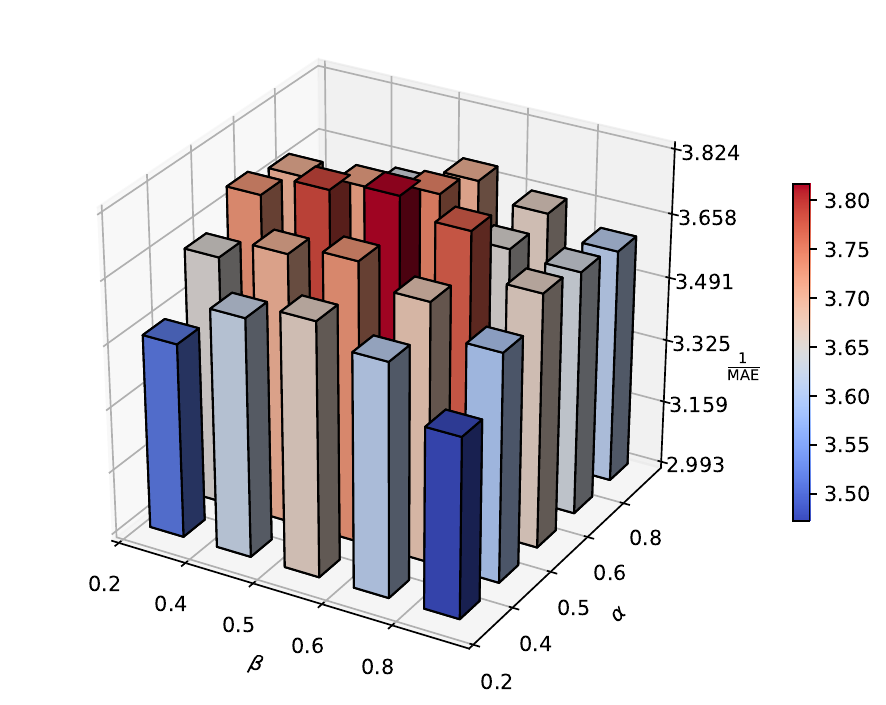}
        }
    \end{minipage}
    \begin{minipage}{0.45\textwidth}
        \centering
        \subfloat[Heartbeat Classification: Accuracy vs. $(\alpha,\beta)$.]{
            \includegraphics[width=\linewidth]{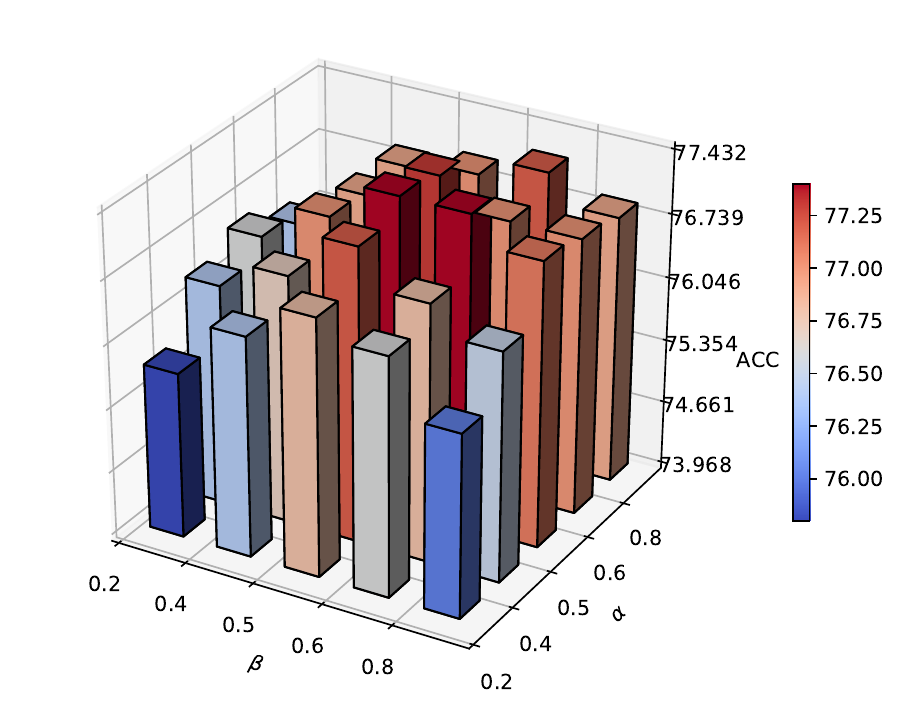}
        }
    \end{minipage}

    \vspace{0.1em}

    % Row 2
    \begin{minipage}{0.45\textwidth}
        \centering
        \subfloat[Weather Imputation: MAE vs. $(\alpha,\beta)$.]{
            \includegraphics[width=\linewidth]{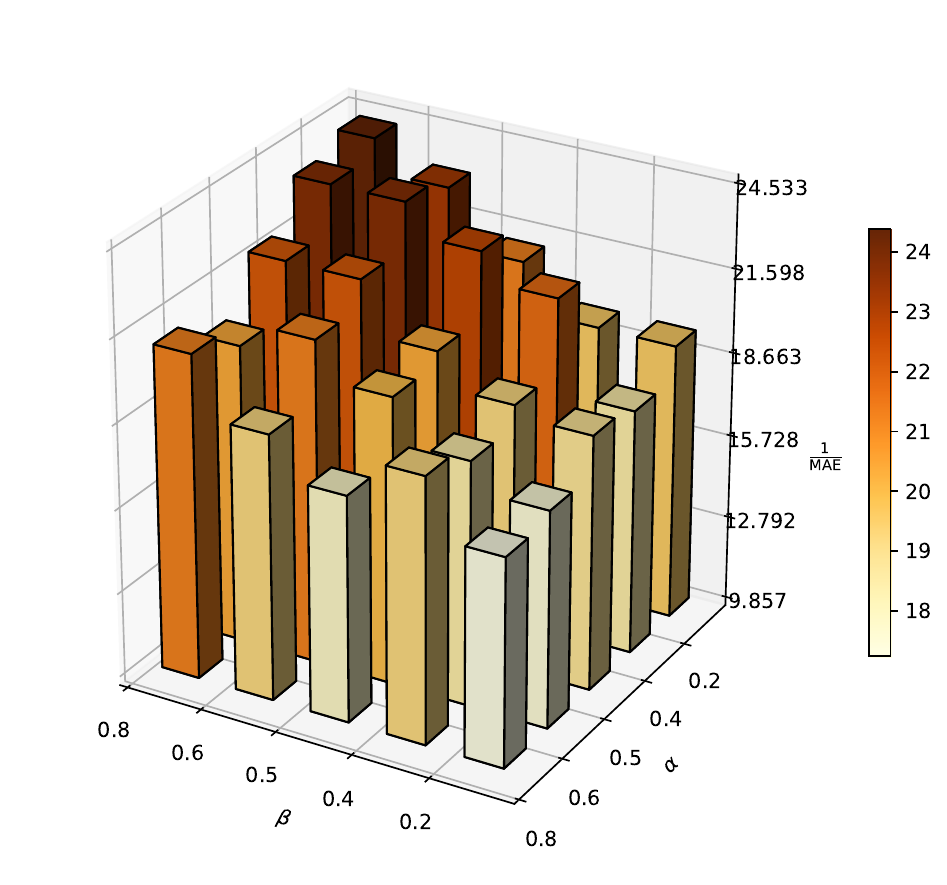}
        }
    \end{minipage}
    \begin{minipage}{0.45\textwidth}
        \centering
        \subfloat[MSL Anomaly Detection: F1-Score vs. $(\alpha,\beta)$.]{
            \includegraphics[width=\linewidth]{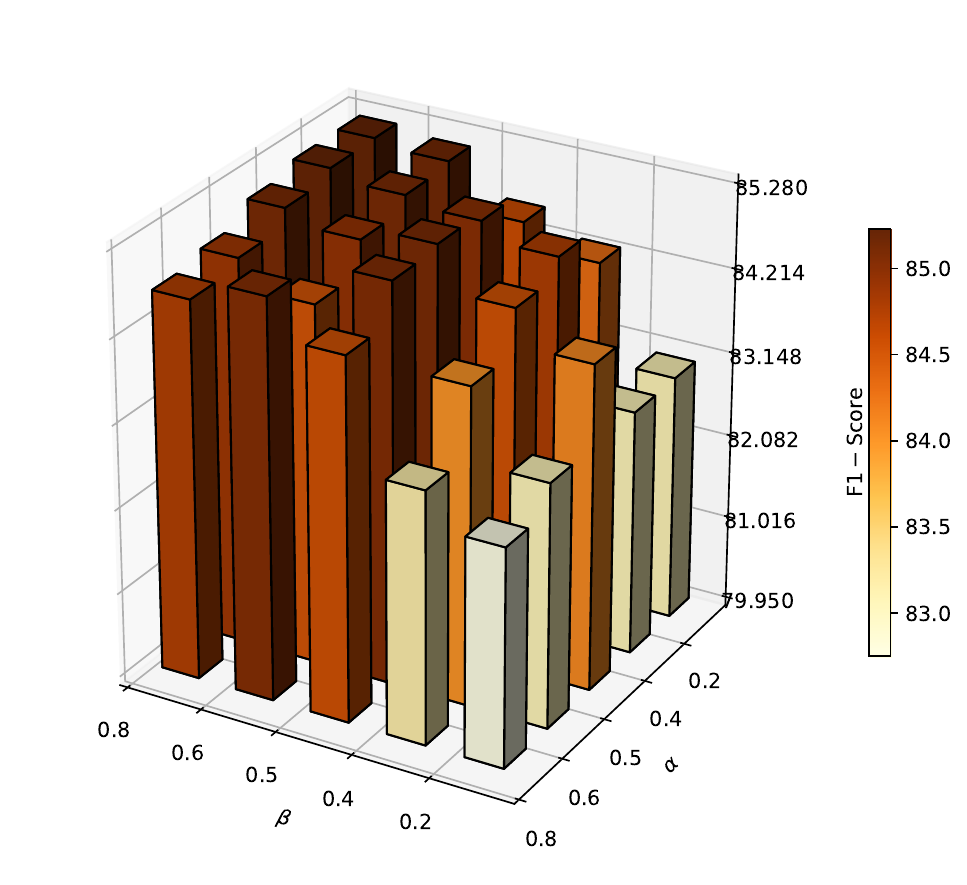}
        }
    \end{minipage}

    \caption{Sensitivity of Fusion Weights $\alpha$ and $\beta$ Across Tasks.}
\label{fig:hyperparameter_analysis}
\end{figure*}

\subsection{\textbf{Hyperparameter Sensitivity Analysis (RQ5)}}
In \ourname{}, we adopt a weighted fusion layer to combine the temporal-path representation and the variate-path representation, controlled by the fusion weights $\alpha$ and $\beta$ in Eq.~\eqref{eq:duomnet_mix}.
To examine the sensitivity of this fusion trade-off, we perform a grid search over $\alpha,\beta\in\{0.2,0.4,0.5,0.6,0.8\}$ on forecasting and imputation on Weather~\cite{wu2021autoformer}, classification on Heartbeat~\cite{bagnall2018uea}, and anomaly detection on MSL~\cite{hundman2018detecting}.
The results are summarized in Fig.\ref{fig:hyperparameter_analysis}(a)-(d), where higher is better for Accuracy, F1-score, and $1/\text{MAE}$.

Across all tasks, \ourname{} is most reliable when both paths remain active (i.e., neither $\alpha$ nor $\beta$ is overly small), confirming that temporal modeling and cross-variate interaction are complementary rather than substitutable.
Forecasting is relatively less sensitive to $\alpha$ and $\beta$ (Fig.\ref{fig:hyperparameter_analysis}(a)), showing only mild variation across the grid.
An explanation is that long-horizon prediction is often dominated by stable structures such as trend and seasonality; thus, once the temporal path is sufficiently emphasized, the model can still perform well even under moderate fusion bias, and the variate path provides a relatively smaller marginal gain unless the dataset is strongly coupled.
Similarly, classification remains stable across a wide range of $(\alpha,\beta)$ (Fig.\ref{fig:hyperparameter_analysis}(b)).
This is because classification primarily relies on global semantic discrimination: the stacked DuoMNet blocks can progressively aggregate high-level evidence and compensate for moderate fusion imbalance, as long as neither path is completely suppressed and both temporal and cross-variate cues are retained.

In contrast, imputation is more sensitive to the fusion weights (Fig.\ref{fig:hyperparameter_analysis}(c)), because recovering missing points typically requires both local temporal continuity (temporal path) and cross-variate complementarity (variate path). Over-weighting either path can lead to incomplete recovery, e.g., temporal-only fusion may miss informative cross-variate cues, while variate-only fusion may fail to preserve fine-grained local dynamics, especially under higher missing ratios. 
Similarly, anomaly detection exhibits strong sensitivity (Fig.\ref{fig:hyperparameter_analysis}(d)), since anomalies are rare, local patterns that demand temporal modeling to capture abrupt deviations, while cross-variate modeling helps validate abnormality via multi-sensor consistency; improper weighting may either dilute anomalies under dominant normal patterns or amplify spurious fluctuations, increasing false alarms. These observations suggest that $\alpha$ and $\beta$ should be selected task-awarely, with balanced fusion particularly important for fine-grained, locality-critical tasks such as imputation and anomaly detection.

\section{RELATED WORK}
\label{sec:literature}

\subsection{\textbf{Transformer-based Time Series Models}}
Transformers have become a dominant paradigm for multivariate time series (MTS) modeling due to their strong ability to capture pairwise dependencies with a global receptive field.
Representative approaches include Informer~\cite{Informer}, Fedformer~\cite{fedformer}, Autoformer~\cite{wu2021autoformer}, Pyraformer~\cite{Pyraformer}, PatchTST~\cite{PatchTST}, Crossformer~\cite{Crossformer}, and UniTime~\cite{liu2024unitime}.
A fundamental limitation of these models lies in the quadratic complexity of self-attention with respect to the token length $L$, which hinders scalability to long-horizon settings.
To alleviate this issue, sparse or structured attention has been explored to reduce the computational burden (e.g., to $\mathcal{O}(L\log L)$)~\cite{kitaev2020reformer, Informer, li2019enhancing}. Yet, it may overlook informative dependencies when only a subset of interactions is retained.
More recently, iTransformer~\cite{liuitransformer} reformulates MTS tokens by treating each variate as a token and applying attention over variates, shifting the quadratic term from $L$ to the number of variates $N$.
While efficient when $N \ll L$, this design may sacrifice fine-grained local temporal interactions that are crucial for locality-sensitive tasks.

\subsection{\textbf{MLP-based Time Series Models}}
MLP-style architectures offer an attractive alternative for efficient time-series modeling by replacing attention with lightweight mixing operations.
Classic representatives include DLinear~\cite{DLinear} and its variants such as RLinear~\cite{li2023revisiting}, which adopt decomposition-aware linear mapping strategies to achieve strong forecasting performance with low computational cost.
TimeMixer~\cite{timemixer} further enhances expressiveness via MLP-based mixing across temporal and channel dimensions.
Despite their efficiency, purely MLP-based designs often rely on relatively limited inductive biases for explicitly modeling complex cross-variate interactions and fine-grained dependency structures, which can restrict their generality across diverse MTS tasks beyond forecasting.

\subsection{\textbf{TCN-based Time Series Models}}
Temporal convolutional networks (TCNs) model temporal dependencies via convolutional receptive fields and have been widely used in time series forecasting and representation learning.
ModernTCN~\cite{luo2024moderntcn} strengthens classical TCNs with modern architectural practices to improve both accuracy and efficiency, while TimesNet~\cite{Timesnet} leverages convolutional modeling together with period-aware representations to capture multi-period temporal patterns.
Although convolution provides strong locality and parallelism, TCN-based models typically require carefully designed receptive fields to capture long-range dependencies, and explicitly modeling cross-variate interactions remains non-trivial when scaling to high-dimensional MTS.

\subsection{\textbf{Mamba-based Time Series Models}}
State space models (SSMs), particularly Mamba-style selective SSMs, have recently emerged as a compelling linear-time alternative to Transformers, demonstrating strong performance across diverse domains~\cite{qu2024survey}.
This progress has spurred increasing interest in adapting Mamba to time series analysis~\cite{cmamba, wang2024mamba, behrouz2024mambamixer, cai2024mambats, wu2025affirm, samba}.
For example, CMamba~\cite{cmamba} augments Mamba-style temporal modeling with extra modules to capture cross-variate interactions, while S-Mamba~\cite{wang2024mamba}, MambaMixer~\cite{behrouz2024mambamixer}, MambaTS~\cite{cai2024mambats}, and more recent variants such as Affirm~\cite{wu2025affirm} and SAMBA~\cite{samba} explore architectural adaptations to improve effectiveness on time series benchmarks.
Despite these advances, existing Mamba-based designs still face notable challenges for general MTS analysis.
First, dependency entanglement is common: temporal dynamics and cross-variate interactions frequently overlap and are mixed, which can obscure task-relevant cues.
Second, fine-grained cross-variate modeling remains insufficient: many methods primarily emphasize global interactions and may under-exploit local, delay-sensitive cross-variate effects.
Motivated by these gaps, we propose \textbf{\ourname{}}, which disentangles cross-time and cross-variate dependency encodings via a dual-path architecture, enabling efficient yet effective modeling for general time-series analysis.

\section{Conclusion}
\label{sec:conclusion}

In this work, we propose \ourname{}, a dual-path delay-aware Mamba backbone for efficient multivariate time-series analysis. \ourname{} explicitly decomposes the time-series context into \textit{Cross-Time} and \textit{Cross-Variate} components via an Adaptive Fourier Filter. These components are processed by stacked DuoMNet blocks with two parallel, scan-coupled pathways: (i) a temporal path that applies Cross-Time Scan and Mamba-SSD to capture long-range intra-series dependencies in a series-independent and parallel manner, and (ii) a variate path that applies Cross-Variate Scan and Mamba-DALA to model delay-aware inter-series interactions using DALA with both global correlation delays and token-level relative delays. The resulting representations are fused through a lightweight weighted fusion layer and projected to task-specific outputs. Extensive experiments across five mainstream tasks demonstrate that \ourname{} achieves consistently strong accuracy while substantially reducing training time and GPU memory usage, suggesting a favorable accuracy-efficiency trade-off and practical scalability for long-horizon and large-scale MTS modeling.

\section*{Acknowledgments}
The research described in this paper has been partially supported by the National Natural Science Foundation of China (project no. 62433016 and 62537001), the General Research Funds from the Hong Kong Research Grants Council (project No. PolyU 15207322, 15200023, 15206024, and 15224524), Hong Kong Research Grants Council’s Theme-based Research Scheme (No. T43-513/23-N), Hong Kong Research Grants Council’s Research Impact Fund (No. R1015-23), Hong Kong Research Grants Council’s Collaborative Research Fund (No. C1043-24GF), Internal research funds from Hong Kong Polytechnic University (project no. P0059586, P0042693, P0048625, and P0051361), and Sheertek International (HK) Limited. This work was supported by computational resources provided by The Centre for Large AI Models (CLAIM) of The Hong Kong Polytechnic University.
\small
\bibliographystyle{unsrt}
\bibliography{references/references}

\end{document}